\newcommand{\norm}[1]{\lVert#1\rVert}
\newcommand{\grad}{\nabla}
\newcommand{\param}{x}
\newcommand{\seq}{\param}
\newcommand{\fseq}{y}
\newcommand{\corrupt}{\wt{\fseq}}
\newcommand{\wt}{\widetilde}
\newcommand{\opt}{\param^{*}}
\newcommand{\eps}{\varepsilon}
\newcommand{\iter}{\kappa}
\newcommand{\cost}{\iota}
\newcommand{\E}{\mathbb{E}}
\newcommand{\pr}{\mathbb{P}}
\newcommand{\systemname}{SCAR}
\newtheorem{thm}{Theorem}[section]
\newtheorem{lemma}[thm]{Lemma}
\newtheorem*{thm*}{Theorem}
\newtheorem*{lemma*}{Lemma}
\newtheorem*{cor*}{Corollary}
\newtheorem*{prop*}{Proposition}
\newtheorem*{conjecture*}{Conjecture}
\theoremstyle{definition}
\newtheorem*{defn*}{Definition}
\theoremstyle{definition}
\theoremstyle{definition}
\newtheorem{ex}{Example}[section]
\theoremstyle{remark}
\newtheorem*{ex*}{Example}
\theoremstyle{definition}
\theoremstyle{definition}
\newtheorem*{assm*}{Assumption}
\theoremstyle{remark}
\newtheorem{remark}{Remark}[section]
\theoremstyle{remark}
\newtheorem*{remark*}{Remark}
\title{Fault Tolerance in Iterative-Convergent Machine Learning}
\begin{document}

\author[1,2]{Aurick Qiao}
\author[2]{Bryon Aragam}
\author[1]{Bingjing Zhang}
\author[1,2]{Eric P. Xing}

\affil[1]{Petuum Inc.}
\affil[2]{Carnegie Mellon University}

\maketitle

\vskip 0.3in

\begin{abstract}

Machine learning (ML) training algorithms often possess an inherent self-correcting behavior due to their iterative-convergent nature. Recent systems exploit this property to achieve adaptability and efficiency in unreliable computing environments by relaxing the consistency of execution and allowing calculation errors to be self-corrected during training. However, the behavior of such systems are only well understood for specific types of calculation errors, such as those caused by staleness, reduced precision, or asynchronicity, and for specific types of training algorithms, such as stochastic gradient descent. In this paper, we develop a general framework to quantify the effects of calculation errors on iterative-convergent algorithms and use this framework to design new strategies for checkpoint-based fault tolerance. Our framework yields a worst-case upper bound on the iteration cost of arbitrary perturbations to model parameters during training. Our system, \systemname{}, employs strategies which reduce the iteration cost upper bound due to perturbations incurred when recovering from checkpoints. We show that \systemname{} can reduce the iteration cost of partial failures by \( 78 \)\%--\( 95 \)\% when compared with traditional checkpoint-based fault tolerance across a variety of ML models and training algorithms. 
\end{abstract}


\section{Introduction}
\label{sec:intro}

Distributed model training for machine learning (ML) is a workload which is typically long-running and resource-intensive. Throughout a job's lifetime, it is susceptible to hardware failures, performance fluctuations, and other uncertainties inherent to real-world cluster environments. For example, processes can be preempted by a cluster resource allocator~\citep{yarn,mesos}, parameter synchronization can be bottlenecked on a slow or congested network~\citep{li-ps-nips,poseidon}, and stragglers can severely impact overall job throughput~\citep{cipar-stragglers,harlap-stragglers}. Thus, developing new fault-tolerance strategies for modern ML systems is an important area of research.

ML-agnostic distributed systems approaches for addressing such problems often adopt strong consistency semantics. They aim to provide strong execution guarantees at a per-operation level (such as linearizability or serializability), but may also incur higher performance overhead. 
On the other hand, ML training is often tolerant to small calculation errors and may not require such strong consistency guarantees. This observation has been exploited by recent ML systems to overcome cluster unreliability and resource limitation issues. For example, bounded staleness consistency allows stale model parameters to be used, reducing the cost of synchronization and mitigating the effects of stragglers and/or congested networks~\citep{ho-ssp,cipar-stragglers,cui-staleness}. Training using quantized or low-precision floating point representations drastically reduces the overhead of computation and communication limitations~\citep{DBLP:journals/corr/CourbariauxBD14,DBLP:journals/corr/GuptaAGN15,hubara-qnn}. Lock-free execution eliminates the cost of blocking synchronization primitives, achieving higher throughput parallel training~\citep{Niu:2011:HLA:2986459.2986537,Dean:2012:LSD:2999134.2999271}. One notable exception to this trend is checkpoint-based fault tolerance, a common strategy in current ML systems for mitigating hardware failures \citep{199317,Wei:2015:MCC:2806777.2806778,Low:2012:DGF:2212351.2212354} which continues to enforce strong consistency semantics at a high cost of re-computing lost work.

\begin{figure}
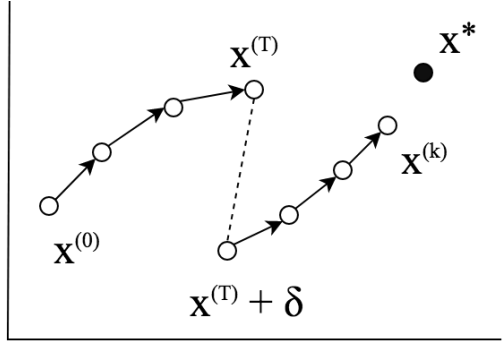

\centering{
\includegraphics[width=0.4\columnwidth]{img/{{self-correcting}}}}
\caption{The self-correcting behavior of iterative-convergent algorithms. Even though a calculation error results in an undesirable perturbation of \( \delta \) at iteration \( T \), the subsequent iterations still brings the solution closer to the optimum value of \( x^* \).}
\label{fig:self-correcting}
\end{figure}

This trend of relaxing consistency in ML systems relies on a fundamental trade-off, allowing the training algorithm to incur computation errors in order to gain more freedom to optimize execution and adapt to environmental faults. To preserve correctness, it relies on the \textit{self-correcting} behavior of iterative-convergent ML training algorithms. During each step, the training algorithm calculates updates based on the current values of model parameters, and then applies the updates to obtain a ``better'' set of model parameters. By iteratively performing this computation, the model parameters eventually converge to a set of optimal values. Small computation errors made during this procedure are eventually washed out by the successive iterative improvements (see Fig.~\ref{fig:self-correcting}). 

This self-correcting behavior of ML training suggests a general strategy for designing robust training systems for unreliable environments, as follows:
\begin{enumerate}[(A)]
\item The execution system allows certain environmental faults and/or resource limitations to manifest as calculation errors in model training. These errors can be conceptualized as \textit{perturbations} to the model parameters. \label{rule-A}
\item The perturbations are self-corrected by the model training algorithm, which incurs an extra number of iterations. We refer to this number of extra iterations as the \textit{iteration cost} of the perturbations. \label{rule-B}
\end{enumerate}
Unfortunately, this approach has only been used and analyzed individually for a few limited cases such as the ones listed above, while other opportunities still exist. 

\begin{figure}
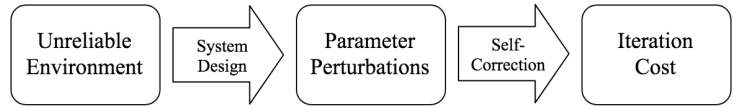

\centering{
\includegraphics[width=0.6\columnwidth]{img/{{fault-tolerance-framework}}}}
\caption{A framework for designing robust training systems by exploiting the self-correcting behavior of ML. First, through system design, resource instabilities and constraints in unreliable computing environments are allowed to manifest as perturbations in model parameters. Then, through the self-correcting behavior of ML, the perturbations are automatically corrected but incurs a cost in the number of iterations to convergence.}
\label{fig:fault-tolerance-framework}
\end{figure}

Motivated by this general strategy, in this paper we develop a framework for exploiting self-correction in ML systems in a way that is adaptive to generic perturbations whose cause or origin is unknown. It provides a theoretical foundation for understanding the self-correcting behavior of iterative-convergent model training as well as the tools needed by ML systems to take advantage of this behavior. Our main contributions are:
\begin{enumerate}
\item We quantify the impact of generic perturbations on iterative-convergent algorithms in terms of their iteration cost. Under reasonable convergence assumptions, we bound the iteration cost in terms of the sizes of these perturbations.
\item We propose new strategies for checkpoint-based fault tolerance in distributed model training. 
Partially recovering from checkpoints, combined with prioritizing checkpoints in a way that reduces the size of perturbations, can significantly reduce the iteration cost due to partial failures.
\item We design and implement \systemname{}, a parameter server system which employs our checkpoint strategies. We show that \systemname{} reduces the iteration cost of partial failures by \( 78 \)\%--\( 95 \)\% across a variety of popular ML models and training algorithms when compared with traditional checkpoint recovery.
\end{enumerate}

\section{Modeling Faults in~Iterative-Convergent Machine Learning}
\label{sec:modeling}

Most ML algorithms are iterative, i.e. model parameters are updated given a current estimate of the model parameters $\seq^{(k)}$ until convergence to some target parameter $\opt$. Such algorithms are commonly called \emph{iterative-convergent}. Examples include classical optimization algorithms such as gradient descent and stochastic gradient descent (SGD) as well as Monte Carlo methods such as Gibbs sampling and Metropolis-Hastings. These algorithms are the basic building blocks of models such as deep neural networks, matrix factorization, and latent Dirichlet allocation. In their most general form, such schemes can be written
\begin{align}
\label{eq:def:basic}
\seq^{(k+1)}
= f(\seq^{(k)}),
\quad
\seq^{(k)} \in\mathbb{R}^{d},
\end{align}
where \( \seq^{(k)} \) represents the model parameter values at iteration \( k \) and \( f \) updates the current state \( \seq^{(k)} \) to obtain the new state \( \seq^{(k+1)} \). In practice,  \( f \) is a known function that depends on the data available at time  \( k \). 

This model of iterative-convergent algorithms assumes that the current state \( \seq^{(k)} \) is stored persistently and losslessly in memory. In practice, modern distributed ML systems are subject to faults such as hardware failures, memory corruption, and performance fluctuations. Thus, it is unrealistic to assume that \( \seq^{(k)} \) can always be retrieved with perfect fidelity. To model this uncertainty, let $\delta_{k}$ be a random variable that represents an \emph{unknown} perturbation that corrupts the current state to produce a perturbed state $\seq^{(k)}+\delta_{k}$. 
We make no assumptions about the cause, size, or behavior of the perturbations $\delta_{k}$. More specifically, we assume the iterates obey the following scheme:
\begin{align}
\label{eq:gen:iter}
\begin{aligned}
\fseq^{(0)} &= \seq^{(0)} \\
\fseq^{(1)} &= f(\fseq^{(0)} +\delta_{0}) \\
&\,\,\,\vdots\\
\fseq^{(k+1)} &= f(\fseq^{(k)} +\delta_{k}) \\
\end{aligned}
\end{align}
In the absence of errors, ie. \( \delta_k = 0 \), we have $\fseq^{(k)}=\seq^{(k)}$, which reduces to the basic iterative scheme (\ref{eq:def:basic}). Moreover, since $\delta_{k}$ is arbitrary, this model allows for \emph{any} type of perturbation. In particular, perturbations may occur in every iteration or periodically according to some random process.

This setup captures many of the ways that cluster environment faults can be manifested as perturbations in distributed ML systems, and we give a few important examples below.

\begin{ex}[Reduced Precision]
\label{ex:faults:reduced-precision}
A simple practical example is using reduced precision floating/fixed point representations for storing parameter values. Suppose \( \wt\fseq^{(k)} \) is the reduced precision version of the exact parameter values \( \fseq^{(k)} \), then the algorithm suffers perturbations of \( \delta_k = \wt\fseq^{(k)} - \fseq^{(k)} \) at each iteration \( k \). If the representation has a \( p \)-bit mantissa, then the size of \( \delta_k \) is bounded by \( |\delta_k| < 2^{-(p-1)} |y^{(k)}| \)~\citep{Higham:2002:ASN:579525}.
\end{ex}

\begin{ex}[Bounded Staleness Consistency]
\label{ex:faults:staleness}
In SGD under the stale synchronous parallel (SSP) consistency model~\citep{ho-ssp}, gradients are computed in a data-parallel fashion where each of \( M \) machines may observe a stale version of the model parameters \( \wt\param_m^{(k)} \). Suppose \( \nabla(\wt\param_m^{(k)},\; D_m) \) are the gradients computed during iteration \( k \) using input data \( D_m \) at machine \( m \). If \( \nabla(\param^{(k)},\; D) \) is the true stochastic gradient at iteration \( k \), then the algorithm suffers a perturbation at iteration \( k + 1 \) of:
\[ \delta_{k+1} = \frac{1}{M}\sum_{m=1}^M \nabla(\wt\param_m^{(k)},\; D_m) - \nabla(\param^{(k)},\; D) \]
\end{ex}

\begin{ex}[Checkpoint-based Fault Tolerance]
\label{ex:faults:checkpoint}
In failure recovery from checkpoints, a copy of the entire job state is periodically saved to persistent storage, and is restored in the case of a failure. For distributed model training, the saved state includes the entire set of model parameters. Suppose a system experiences a failure at iteration \( T \), and recovers from the failure by restoring a full checkpoint of the model parameters taken at iteration \( C<T \). Then the algorithm suffers a perturbation at iteration \( T \) of $$\delta_T = \seq^{(T)} - \seq^{(C)}.$$
Although from the system's point of view the application is returned to an exact prior state, we can still view the act of checkpoint recovery as a perturbation to the model parameters.
\end{ex}

\begin{remark}
\label{rem:perturbations}
Reduced precision (Example \ref{ex:faults:reduced-precision}) and bounded staleness consistency (Example \ref{ex:faults:staleness}) have already been the focus of much attention in both the ML and systems communities~\citep{zipml,2018arXiv180711205J,Wei:2015:MCC:2806777.2806778,Dai:2015:HDM:2887007.2887019}. Although not typically studied within the explicit set-up of \eqref{eq:gen:iter}, these strategies generate perturbations which fit within our framework, and preserve the correctness of training by \textit{keeping the sizes of these perturbations small}. This is accomplished via bounded floating-point/fixed-point rounding errors for reduced precision and via a maximum staleness limit for bounded staleness consistency. In Section \ref{sec:fault-tolerance}, we apply the general set-up of \eqref{eq:gen:iter} to devise new strategies for checkpoint-based fault tolerance (Example \ref{ex:faults:checkpoint}). Our system, \systemname{}, applies the same principle of reducing the sizes of perturbations in order to reduce the overall iteration cost of machine failures.
\end{remark}

\begin{remark}
\label{rem:pgd}
The iteration in \eqref{eq:gen:iter} is closely related to \emph{perturbed gradient descent} (PGD) \citep{ge2015saddle,jin2017,du2017}. Formally, PGD is a special case of \eqref{eq:gen:iter}. The main difference lies in the motivation: \citet{jin2017} show that by choosing $\delta_{k}$ cleverly, it is possible to escape saddle points and guarantee that the iteration \eqref{eq:gen:iter} converges to a second-order stationary point. The key idea in PGD is to \emph{design} the perturbations $\delta_{k}$ to an advantage, which is in stark contrast to our set-up, in which we have no control over $\delta_{k}$. In the worst case, we allow $\delta_{k}$ to be chosen adversarially. 
\end{remark}

\section{Analysis}
\label{sec:analysis}

Suppose that an ML system has experienced perturbations $\delta_{1},\ldots,\delta_{T}$ up to the $T$th iteration. 
A (random) sequence $a_{k}$ is called \emph{$\eps$-optimal} if $\E\norm{a_{k}-\opt}<\eps$. The main question we seek to address in this section is the following: \emph{Given $\eps>0$, what is the ``cost'' in number of iterations for $\fseq^{(k)}$ to reach $\eps$-optimality compared to the unperturbed sequence $\seq^{(k)}$?} We write ``cost'' in quotations to emphasize that this number can be negative---for example, $\delta_{k}$ could randomly move $\fseq^{(k)}$ closer to $\opt$, or $\delta_{k}$ can be constructed in advance to improve convergence as in perturbed gradient descent (see Remark~\ref{rem:pgd}).

We call this quantity the \emph{iteration cost} of the perturbed sequence $\fseq^{(k)}$, introduced in Sec.~\ref{sec:intro}. Our goal in the present section is to bound the iteration cost, which will be formally defined next.

\subsection{Iteration cost}

\begin{figure*}
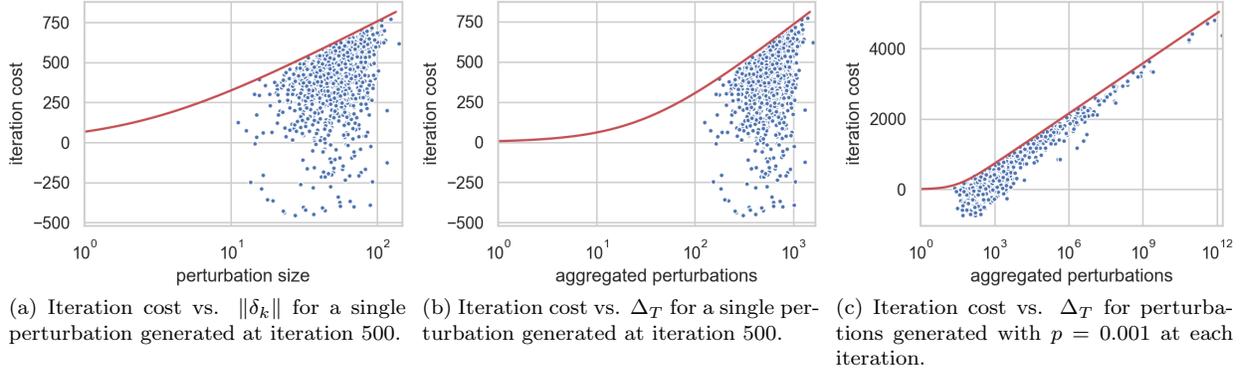

\centering
\subfloat[Iteration cost vs. \( \norm{\delta_k} \) for a single perturbation generated at iteration 500.]{\includegraphics[trim={12 12 12 12},clip,width=0.32\textwidth]{img/{{qp_single}}}}
\;
\subfloat[Iteration cost vs. \( \Delta_T \) for a single perturbation generated at iteration 500.]{\includegraphics[trim={12 12 12 12},clip,width=0.32\textwidth]{img/{{qp_single_sum}}}}
\;
\subfloat[Iteration cost vs. \( \Delta_T \) for perturbations generated with \( p = 0.001 \) at each iteration.]{\includegraphics[trim={12 12 12 12},clip,width=0.32\textwidth]{img/{{qp_rho}}}}
\caption{Illustrations of iteration costs using gradient descent on a simple 4-D quadratic program. Each plot consists of 1,000 trials with perturbation(s) randomly generated according to a normal distribution. The red line is the iteration cost bound according to Theorem~\ref{thm:main}. The value of \( c \) is determined empirically, and the value of \( \epsilon \) is set so that an unperturbed trial converges in roughly 1,000 iterations.}
\label{fig:qp-bounds}
\end{figure*}

In order to keep things simple, we assume that the unperturbed sequence satisfies
\begin{align}
\label{assm:seq:stepbound}
\norm{f(\seq^{(k)}) - \opt}
\le c\;\norm{\seq^{(k)} - \opt},
\quad 0<c<1,
\end{align}
\noindent
i.e. the iterates $\seq^{(k)}$ converge linearly. Although some algorithms (e.g. SGD) do not converge linearly, many of the most popular algorithms in practice do (e.g. gradient descent, proximal quasi-Newton, Metropolis-Hastings). This assumption is made purely for simplicity: We use \eqref{assm:seq:stepbound} as a baseline for comparison, and the analysis can be easily extended to more general schemes such as SGD if desired (see Example~\ref{ex:analysis:sgd}).

Formally, the iteration cost is defined as follows: Let $\iter(\fseq^{(k)},\;\eps)$ be a lower bound such that $m>\iter(\fseq^{(k)},\;\eps)$ implies $\E\norm{\fseq^{(m)}-\opt}<\eps$ (this may be $+\infty$ or negative).
Under \eqref{assm:seq:stepbound}, it is straightforward to derive a similar lower bound for the unperturbed sequence $\seq^{(k)}$ as $\iter(\seq^{(k)},\;\eps)=\log\big(\frac1\eps\norm{\seq^{(0)} - \opt}\big)/\log(1/c)$. This will be used as a baseline for comparison: The iteration cost for the perturbations $\delta_{k}$ is defined to be
\begin{align}
\label{eq:def:itercost}
\cost(\delta_{k},\;\eps)
:= \iter(\fseq^{(k)},\;\eps) - \iter(\seq^{(k)},\;\eps).
\end{align}
\noindent
Using the unperturbed sequence $\seq^{(k)}$ as a benchmark, $\cost(\delta_{k},\;\eps)$ bounds the additional number of iterations needed for the perturbed sequence $\fseq^{(k)}$ to reach $\eps$-optimality (where we bear in mind that this can be negative). Clearly, $\cost(\delta_{k},\;\eps)$ depends on the sequence $\delta_{k}$, and should be smaller whenever the $\delta_{k}$ are smaller. We seek a bound on $\cost(\delta_{k},\;\eps)$ that holds for \emph{arbitrary $\delta_{k}$}.

\begin{remark}
We use the criterion $\E\norm{\fseq^{(k)}-\opt}<\eps$ as an optimality criterion instead of directly bounding $\pr(\norm{\fseq^{(k)}-\opt}<\eps)$. 
This is commonly done \citep[e.g.][]{bottou2016} since bounds on $\E\norm{\fseq^{(k)}-\opt}$ imply bounds on the latter probability via standard concentration arguments \citep[see e.g.][]{rakhlin2012making}. These bounds will of course depend on the tail behavior of $\delta_{k}$.
\end{remark}

\subsection{Bounding the iteration cost}

To bound the iteration cost, we also require that the update $f$ satisfies a  convergence rate similar to \eqref{assm:seq:stepbound} for the perturbed data $\corrupt^{(k)}:=\fseq^{(k)}+\delta_{k}$:
\begin{align}
\label{assm:fseq:stepbound}
\E\norm{f(\corrupt^{(k)}) - \opt}
\le c\;\E\norm{\corrupt^{(k)} - \opt},
\quad 0<c<1.
\end{align}

\noindent
This simply says that wherever the algorithm is, on average, a single step according to $f$ will not move the iterates further from $\opt$. 

For example, consider gradient descent, which is arguably one of the simplest iterative schemes in ML. For gradient descent, we have $f(\corrupt^{(k)}) = \corrupt^{(k)} - \alpha\grad\ell(\corrupt^{(k)})$, where $\ell$ is the objective function that is being minimized. Then we have the following:

\begin{lemma}[Strongly convex objective]
\label{lem:gd:mainassumption}
Suppose the objective function $\ell$ is strongly convex. Then the gradient descent algorithm satisfies \eqref{assm:fseq:stepbound}. 
\end{lemma}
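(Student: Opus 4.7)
The plan is to prove the stronger, pointwise contraction $\norm{f(y) - \opt} \le c \norm{y - \opt}$ for every deterministic $y$; taking expectations on both sides then immediately yields \eqref{assm:fseq:stepbound} with the same $c$, so no probabilistic argument is needed beyond monotonicity of expectation.

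The first step is to use first-order optimality: since $\opt$ minimizes the convex function $\obj$, we have $\grad\obj(\opt)=0$. Hence, for $f(y)=y-\step\grad\obj(y)$, we can write
\begin{align*}
f(y) - \opt = (y - \opt) - \step\bigl(\grad\obj(y) - \grad\obj(\opt)\bigr).
\end{align*}
Squaring and expanding gives
\begin{align*}
\norm{f(y)-\opt}^{2}
= \norm{y-\opt}^{2}
- 2\step\,\langle \grad\obj(y)-\grad\obj(\opt),\; y-\opt\rangle
+ \step^{2}\norm{\grad\obj(y)-\grad\obj(\opt)}^{2}.
\end{align*}
Strong convexity of $\obj$ with modulus $\mu$ lower-bounds the inner-product term by $\mu\norm{y-\opt}^{2}$, which is the classical co-coercivity consequence of strong convexity. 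To control the final term I would invoke Lipschitz continuity of $\grad\obj$ with some constant $L$, giving $\norm{\grad\obj(y)-\grad\obj(\opt)} \le L\norm{y-\opt}$. Combining, $\norm{f(y)-\opt}^{2} \le (1 - 2\step\mu + \step^{2}L^{2})\norm{y-\opt}^{2}$, and for any $\step \in (0,\, 2\mu/L^{2})$ the bracketed coefficient lies strictly inside $[0,1)$, yielding the contraction constant $c := \sqrt{1-2\step\mu+\step^{2}L^{2}} < 1$.

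The main subtlety is that the lemma is stated with only strong convexity, whereas the argument above also relies on gradient Lipschitzness. I would flag this as an implicit (and standard) companion hypothesis: without some upper bound on $\norm{\grad\obj(y)-\grad\obj(\opt)}$ the $\step^{2}$ term cannot be absorbed, and linear convergence of gradient descent generally fails for merely strongly convex objectives with arbitrarily fast-growing gradients. A secondary point is that the contraction rate depends on $\step$ being small enough; I would either quantify the admissible range of $\step$ explicitly in terms of $\mu$ and $L$, or simply note that the canonical choice $\step = 1/L$ works and produces $c = \sqrt{1-\mu/L}$. Once the pointwise contraction is established, instantiating $y=\corrupt^{(k)}$ and applying $\E[\cdot]$ to both sides delivers \eqref{assm:fseq:stepbound}. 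I do not expect any serious technical obstacle beyond this; the value of the lemma lies in identifying the minimal structural hypothesis on $\obj$ rather than in the calculation itself.
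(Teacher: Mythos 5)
Your proposal is correct and is essentially the paper's own route: the paper simply defers to the standard contraction analysis of gradient descent for strongly convex objectives with Lipschitz-continuous gradients (the proof of Theorem~2.1.5 in the cited Nesterov reference), which is exactly the pointwise bound $\norm{f(y)-\opt}\le c\,\norm{y-\opt}$ you derive and then pass through the expectation, and you are right to flag gradient smoothness as an implicit companion hypothesis of the lemma. One small caveat on your aside: with the crude estimate $\norm{\grad\obj(y)-\grad\obj(\opt)}\le L\norm{y-\opt}$ the choice $\step=1/L$ gives coefficient $1-2\step\mu+\step^{2}L^{2}=2-2\mu/L\ge 1$ unless $\mu>L/2$, so the claimed $c=\sqrt{1-\mu/L}$ at $\step=1/L$ needs the sharper strong-convexity/co-coercivity inequalities rather than your displayed bound; this does not affect the lemma itself, which only needs some $c<1$ and is secured by your argument for $\step\in(0,\,2\mu/L^{2})$.
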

\vspace{-1em}
\begin{proof}
See Appendix \ref{appendix:lem:gd:mainassumption}.
\end{proof}
\vspace{-0.5em}

\noindent
Similar results hold for other optimization schemes such as proximal methods and Newton's method. 
In fact, the assumption of strong convexity can be substantially relaxed to include various nonconvex problems \citep{xu2017globally,attouch2010proximal}. See Example~\ref{ex:analysis:nonconvex}.

\noindent
Under \eqref{assm:seq:stepbound} and \eqref{assm:fseq:stepbound}, we have the following general bound on the iteration cost: 

\begin{thm}
\label{thm:main}
Assume $\E\norm{\delta_{k}}<\infty$ for $k\le T$ and $\delta_{k}=0$ for $k>T$. Under \eqref{assm:seq:stepbound} and \eqref{assm:fseq:stepbound}, we have for any $\eps>0$,
\begin{align}
\label{eq:bound:itercost}
\cost(\delta_{k},\eps)
\le \frac{\log\Big(1 + \frac{\Delta_{T}}{\norm{x^{(0)} - \opt}}\Big)}{\log(1/c)}
\end{align}
\noindent
where $\Delta_{T}:=\sum_{\ell=0}^{T}c^{-\ell}\E\norm{\delta_{\ell}}$.
\end{thm}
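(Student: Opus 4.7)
The plan is to derive a one-step recursion for $\mathbb{E}\lVert y^{(k+1)} - x^{*}\rVert$, unroll it, and compare the resulting convergence time against the explicit unperturbed benchmark $\kappa(x^{(k)},\eps) = \log(\lVert x^{(0)} - x^{*}\rVert/\eps)/\log(1/c)$ stated just before the theorem.

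First, set $a_{k} := \mathbb{E}\lVert y^{(k)} - x^{*}\rVert$ and $d_{k} := \mathbb{E}\lVert\delta_{k}\rVert$, with $a_{0} = \lVert x^{(0)} - x^{*}\rVert$. Applying \eqref{assm:fseq:stepbound} to $\widetilde{y}^{(k)} = y^{(k)} + \delta_{k}$ gives $\mathbb{E}\lVert y^{(k+1)} - x^{*}\rVert \le c\,\mathbb{E}\lVert y^{(k)} + \delta_{k} - x^{*}\rVert$, and then the triangle inequality (justified by the finiteness assumption on $\mathbb{E}\lVert\delta_{k}\rVert$) yields the linear recursion $a_{k+1} \le c(a_{k} + d_{k})$. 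This is the key one-line step; everything else is algebraic.

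Next, I unroll the recursion by induction to obtain $a_{k} \le c^{k} a_{0} + \sum_{\ell=0}^{k-1} c^{k-\ell} d_{\ell}$. Because $d_{\ell} = 0$ for $\ell > T$ by hypothesis, for any $k > T$ the sum truncates at $\ell = T$, and pulling the $c^{k}$ prefactor out converts the geometric weights $c^{k-\ell}$ into $c^{-\ell}$, giving $a_{k} \le c^{k}(a_{0} + \Delta_{T})$. Solving the inequality $c^{m}(a_{0} + \Delta_{T}) < \eps$ for $m$ shows that any $m > \log((a_{0} + \Delta_{T})/\eps)/\log(1/c)$ guarantees $\eps$-optimality of $y^{(m)}$, so this quantity is a valid value for the lower bound $\kappa(y^{(k)},\eps)$.

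Finally, subtracting the unperturbed benchmark, the $\log(1/\eps)$ terms cancel and I get $\iota(\delta_{k},\eps) \le \log((a_{0} + \Delta_{T})/a_{0})/\log(1/c) = \log(1 + \Delta_{T}/\lVert x^{(0)} - x^{*}\rVert)/\log(1/c)$, which is the claimed bound. No step is individually difficult; the only place to be careful is keeping the expectation outside the norm throughout, so that \eqref{assm:fseq:stepbound} applies directly to the random iterate $\widetilde{y}^{(k)}$ and the triangle inequality can be applied under the expectation. The extraction of the $c^{-\ell}$ weights in $\Delta_{T}$ from the unrolled sum is the one bookkeeping step worth double-checking.
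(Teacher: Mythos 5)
Your proof is correct and follows essentially the same route as the paper's: the one-step recursion $\E\norm{\fseq^{(k+1)}-\opt}\le c\big(\E\norm{\fseq^{(k)}-\opt}+\E\norm{\delta_k}\big)$ obtained from \eqref{assm:fseq:stepbound} plus the triangle inequality is exactly the paper's Lemma~\ref{lem:periodic:basic}, and your unrolling, truncation at $\ell=T$, and subtraction of the unperturbed benchmark match the paper's argument step for step. The indexing of the geometric weights $c^{-\ell}$ in $\Delta_T$ also checks out, so there is nothing to fix.
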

\vspace{-1em}
\begin{proof}
See Appendix ~\ref{appendix:thm:main}.
\end{proof}
\vspace{-0.5em}

\noindent
In fact, the bound \eqref{eq:bound:itercost} is tight in the following sense: As long as \eqref{assm:seq:stepbound} cannot be improved, there exists a deterministic sequence $\delta_{1},\ldots,\delta_{T}$ such that \eqref{eq:bound:itercost} holds with equality. Theorem~\ref{thm:main} is illustrated on a simple quadratic program (QP) in Figure~\ref{fig:qp-bounds}, which provides empirical evidence of the tightness of the bound. Additional empirical experiments are illustrated in Figure~\ref{fig:mlr-cost}.

The interesting part of the bound \eqref{eq:bound:itercost} is the ratio $\Delta_{T}/\norm{x^{(0)} - \opt}$, which is essentially a ratio between the aggregated cost of the perturbations and the ``badness'' of the initialization. For more intuition, re-write this ratio as 
\begin{align*}
\frac{\Delta_{T}}{\norm{x^{(0)} - \opt}}
= \frac{\sum_{\ell=0}^{T}c^{k-\ell}\E\norm{\delta_{\ell}}}{c^{k}\norm{x^{(0)} - \opt}}.
\end{align*}

\noindent
Up to constants, the denominator is just the error of the original sequence $\seq^{(k)}$ after $k$ iterations. The numerator is more interesting: It represents a time-discounted aggregate of the overall cost of each perturbation. Each perturbation $\delta_{\ell}$ is weighted by a discount factor $c^{k-\ell}$, which is larger for more recent perturbations (e.g. $\delta_{T}$) and smaller for older perturbations (e.g. $\delta_{0}$). Thus, the dominant quantity in \eqref{eq:bound:itercost} is a ratio between the re-weighted perturbations and the expected error from the original sequence. As expected, if the original sequence converges very quickly and the perturbations are large, the iteration cost increases proportionally.

Theorem~\ref{thm:main} also assumes that there are no perturbations after time $T$. The idea is that \emph{if} there are no more perturbations, \eqref{eq:bound:itercost} bounds the cost of the perturbations incurred so far. Of course, in practice, the system may experience faults after time $T$, in which case \eqref{eq:bound:itercost} can be adjusted to include the most recent fault. The difficulty in directly accounting for future perturbations lies in our assumption that the $\delta_{k}$ can be arbitrary: If future iterations can experience \emph{any} perturbation, it is clear that convergence cannot be guaranteed (e.g. consider $\delta_{k}=x-\fseq^{(k)}$ for some fixed $x\ne\opt$ and all $k>T$). Under some additional assumptions, something can be said about this case; see Example~\ref{ex:analysis:infinite} in the next section.

\subsection{Examples}

In this section, we discuss some examples where the bound \eqref{eq:bound:itercost} is applicable, along with some generalizations.

\begin{ex}[Convex optimization]
\label{ex:analysis:convex}
Lemma~\ref{lem:gd:mainassumption} implies that Theorem~\ref{thm:main} applies to ML systems that are based on minimizing a strongly convex objective. This includes many classical problems such as linear and logistic regression. 
\end{ex}

\begin{ex}[Nonconvex optimization]
\label{ex:analysis:nonconvex}
If the loss function $\ell$ is nonconvex, then Theorem~\ref{thm:main} still applies with some modifications. The assumptions~\eqref{assm:seq:stepbound} and~\eqref{assm:fseq:stepbound} can be verified using known results on nonconvex optimization \citep{xu2017globally,attouch2010proximal} under the so-called \emph{Kurdyka-\L ojasiewicz property}, from which the bound \eqref{eq:bound:itercost} follows directly. Trouble arises, however, when $\ell$ has multiple basins of attraction: A perturbation $\delta_{k}$ could ``push'' the perturbed iterate $\corrupt^{(k)}$ into a different basin, resulting in a limit point that is different from $\opt$. Theorem~\ref{thm:main} continues to hold as long as this can be avoided, i.e. the $\delta_{k}$ are not too large. We leave it to future work to study this case in more detail.
\end{ex}

\begin{ex}[Infinite perturbations]
\label{ex:analysis:infinite}
An interesting case occurs when $\delta_{k}\ne 0$ for all $k$. In other words, there is a possibility of a fault in \emph{every iteration}. For arbitrary $\delta_{k}$, it is clearly impossible to establish any kind of convergence result. In fact, suppose $\norm{\delta_{k}}\le\Delta$ for each $k$. Then there is an irreducible error of $(c/(1-c))\Delta$, meaning that we cannot hope to obtain an $\eps$-optimal solution for any $\eps<(c/(1-c))\Delta$. This helps to explain why we focus on the nontrivial case with $\delta_{k}=0$ for $k>T$ in Theorem~\ref{thm:main}. One setting in which the analysis with infinite perturbations is nontrivial is when $\Delta$ is known to be small, e.g. when using reduced precision as in Example~\ref{ex:faults:reduced-precision}. This setting can be analyzed by setting \( \Delta \ge 2^{-(p-1)}\norm{\param^{(k)}} \) for all \( k \). For details, see Appendix~\ref{appendix:infinite}. 

\end{ex}

\begin{ex}[SGD]
\label{ex:analysis:sgd}
The assumption \eqref{assm:fseq:stepbound} does \emph{not} hold for SGD, which has a sublinear convergence rate in general. Nonetheless, it is straightforward to extend our framework to sublinear algorithms, with the caveat that analogous bounds on the iteration cost become more complicated. In fact, it is not hard to see from our proof how to do this: Lemma~\ref{lem:periodic:basic} in the Appendix establishes the following useful general inequality
\begin{align*}
\E\norm{\fseq^{(k+1)} - \opt}
&\le c^{k+1}\big[\norm{x^{(0)} - \opt} + \Delta_{T}\big].
\end{align*}
Evidently, the factor of $c$ governs how quickly $\Delta_{T}$ (i.e. the cost incurred by perturbations) gets washed out as $k$ increases. For algorithms that converge sublinearly such as SGD, this effect will also be sublinear, but still tend to zero as long as the perturbations are not too large (see Appendix~\ref{appendix:sgd} for a brief discussion). This is further corroborated by the empirical experiments in Section~\ref{sec:exp}, where we show that the strategies for checkpoint-based fault tolerance proposed in the next section are successful on SGD as well as other optimization schemes such as alternating least squares.
\end{ex}

\section{Strategies for~Checkpoint-Based Fault Tolerance}
\label{sec:fault-tolerance}

As an application of our iteration cost bounds, we study new strategies for checkpoint-based fault tolerance, by which stateful computations are made resilient to hardware failures by periodically checkpointing the entire program state. Whenever such a failure occurs, the most recent checkpoint is restored and computation is resumed (Example~\ref{ex:faults:checkpoint}). The total running time \( T \) of the system can be modeled as~\citep{daly-optimum-checkpoint}:
\[ T = T_{\textup{solve}} + T_{\textup{dump}} + T_{\textup{rework}} + T_{\textup{restart}} \]
where \( T_{\textup{solve}} \) is the normal runtime of the program without any failures, \( T_{\textup{dump}} \) is the time taken saving a checkpoint, \( T_{\textup{rework}} \) is the time spent repeating lost work due to restoring to a previous checkpoint, and \( T_{\textup{restart}} \) is the time spent restoring a checkpoint. For iterative ML training, \( T_{\textup{rework}} = \cost(\delta_{k},\;\eps) \cdot T_{\textup{iter}} \), where \( \cost(\delta_{k},\;\eps) \) is the iteration cost of the failure and \( T_{\textup{iter}} \) is the time taken per iteration of the training algorithm. We focus our strategies primarily on reducing \( T_{\textup{rework}} \), since \( T_{\textup{solve}} \) and \( T_{\textup{iter}} \) are constant regardless of checkpoints and failures, and \( T_{\textup{dump}} \) and \( T_{\textup{restart}} \) are typically small fractions of \( T_{\textup{iter}} \) in our experiments. 

Using the traditional checkpoint-based fault tolerance mechanism, the entire program state is saved during each checkpoint, restored after a failure, and all computation since the previous checkpoint repeated. Thus, \( T_{\textup{rework}} \) is the total amount of time between the previous checkpoint and the failure, spanning the computation which must be repeated. This process maximizes the consistency of recovery by restoring the system to an exact state it was in during the past, but can incur a high rework overhead if the checkpoint interval is long.

For iterative-convergent ML, however, we can exploit its self-correcting behavior to reduce \( T_{\textup{rework}} \). In particular, we can give up the consistency of checkpoint-recovery, and design a system which tries to reduce the size of the perturbation \( \norm{\delta_T} \) incurred upon failure. By doing so, Theorem~\ref{thm:main} shows that the iteration cost bound is also reduced, lowering the worst case iteration cost and thus reducing \( T_{\textup{rework}} \).

Our system, \textit{\systemname{}},\footnote{SCAR stands for Self-Correcting Algorithm Recovery.} implements two strategies which reduce \( \norm{\delta_T} \) compared to traditional checkpoint recovery: (1) Partial recovery, and (2) Prioritized checkpoints. \systemname{} extends the popular parameter server (PS) architecture for distributed model training~\citep{ho-ssp,li-ps-nips,186214}---the model parameters are partitioned across a number of PS nodes, which are accessed by worker nodes. We assume that during a failure, any number of PS nodes can go down, causing the loss of their partitions of the model parameters. We present these strategies and the design of \systemname{} below, and show evaluation of \systemname{} in Section ~\ref{sec:exp}.

\subsection{Partial Recovery}
\label{sec:partial-recovery}

Our first strategy is to only recover (i.e. from a previous checkpoint) the part of the model parameters which are lost due to the failure. Since the model parameters are partitioned across several PS nodes, a partial failure of PS nodes should only cause a partial loss of model parameters. Mathematically, the partial recovery strategy should result in a smaller perturbation to the model parameters and, according to Theorem \ref{thm:main}, incur a smaller iteration cost.

Suppose that
a fully-consistent checkpoint
is taken after iteration \( C \)
, and a failure occurs during iteration \( T > C \) which triggers checkpoint recovery.
\begin{thm}
\label{thm:partial-recovery}
Let \( \delta \) be the perturbation incurred by full checkpoint recovery, and \( \delta' \) be the perturbation incurred by partial checkpoint recovery, then \( \norm{\delta'} < \norm{\delta} \).
\end{thm}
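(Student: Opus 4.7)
The plan is to exploit the partition structure of the model parameters across the parameter server nodes and show that the two perturbations agree on the lost coordinates but $\delta'$ is zero on the surviving coordinates, giving a strict decrease in any reasonable (monotone) norm provided the surviving coordinates have drifted from their checkpoint values.

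First I would fix notation. Let the $d$ coordinates of $\seq \in \R^d$ be partitioned into blocks $\{B_1,\ldots,B_N\}$ corresponding to the $N$ PS nodes, and let $F \subseteq \{1,\ldots,N\}$ index the PS nodes that actually failed during iteration $T$ (so $F \ne \emptyset$ but $F \ne \{1,\ldots,N\}$, since this is the \emph{partial} failure regime). Write $v_{B_i}$ for the sub-vector of $v$ restricted to block $B_i$. Using Example~\ref{ex:faults:checkpoint}, full recovery replaces the entire state $\fseq^{(T)}$ by $\seq^{(C)}$, so
\begin{equation*}
\delta \;=\; \seq^{(C)} - \fseq^{(T)}, \qquad \delta_{B_i} \;=\; \seq^{(C)}_{B_i} - \fseq^{(T)}_{B_i} \quad \text{for every } i.
\end{equation*}
Partial recovery, on the other hand, only overwrites the blocks in $F$, leaving the surviving blocks untouched, so
\begin{equation*}
\delta'_{B_i} \;=\; \begin{cases} \seq^{(C)}_{B_i} - \fseq^{(T)}_{B_i}, & i \in F, \\ 0, & i \notin F. \end{cases}
\end{equation*}

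Next I would simply compare magnitudes block-by-block. For the Euclidean norm (the norm implicit in the analysis of Section~\ref{sec:analysis}),
\begin{equation*}
\norm{\delta}^2 \;=\; \sum_{i=1}^N \norm{\seq^{(C)}_{B_i} - \fseq^{(T)}_{B_i}}^2 \;=\; \norm{\delta'}^2 \;+\; \sum_{i \notin F} \norm{\seq^{(C)}_{B_i} - \fseq^{(T)}_{B_i}}^2,
\end{equation*}
so $\norm{\delta'} \le \norm{\delta}$ unconditionally. Strict inequality follows as soon as at least one surviving block has drifted from its checkpointed value, i.e., $\fseq^{(T)}_{B_i} \ne \seq^{(C)}_{B_i}$ for some $i \notin F$. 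This mild non-degeneracy condition is what makes the bound of Theorem~\ref{thm:main} strictly tighter after partial recovery, and it is essentially always satisfied in practice because the training algorithm is making nontrivial progress between iterations $C$ and $T$. I would state this as a standing assumption (or equivalently, point out the degenerate case in which all surviving blocks happen to be at their checkpoint values, so both strategies coincide).

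The argument generalizes immediately to any monotone block norm (for instance $\ell_p$ norms for $p \ge 1$), since the only fact used is that zeroing out coordinates cannot increase the norm. The main ``obstacle'' is really just bookkeeping: being precise about what the perturbation in Example~\ref{ex:faults:checkpoint} means when only part of the state is replaced, and flagging the non-degeneracy condition needed for strictness. There is no serious analytic work here; the theorem is a direct structural consequence of the observation that partial recovery induces a perturbation supported on a strict subset of the coordinates of the full-recovery perturbation.
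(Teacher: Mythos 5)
Your proof takes essentially the same route as the paper's: decompose the full-recovery perturbation over lost and surviving coordinates and observe that dropping the surviving blocks can only decrease the Euclidean norm, i.e.\ $\norm{\delta}^2 = \norm{\delta'}^2 + \sum_{i\notin F}\norm{\seq^{(C)}_{B_i} - \fseq^{(T)}_{B_i}}^2$, which is exactly the lost-versus-surviving split used in the appendix. If anything you are more careful than the paper itself: its proof only establishes $\norm{\delta'}\le\norm{\delta}$ even though the theorem is stated with strict inequality, and your explicit non-degeneracy condition (at least one surviving block has drifted from its checkpointed value between iterations $C$ and $T$) is precisely what is needed to upgrade $\le$ to $<$.
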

\vspace{-1em}
\begin{proof}
See Appendix ~\ref{appendix:thm:partial-recovery}.
\end{proof}
\vspace{-0.5em}
Furthermore, the size of the perturbation should also be related to the fraction of model parameters which are lost---losing fewer model parameters should generate a smaller perturbation. To establish this relationship, we will assume that parameters are partitioned uniformly at random across the PS nodes, and so a random subset of parameters will be lost. This assumption is reasonable as the partitioning scheme is typically within the control of the PS system, which can choose a random partitioning.

\begin{thm}
\label{thm:partial-recovery-exp} 
Suppose that a failure causes the loss of a fraction \( 0 < p \le 1 \) of all model parameters chosen uniformly at random. Let \( \delta \) be the perturbation incurred by full checkpoint recovery, and \( \delta' \) be the perturbation incurred by partial checkpoint recovery, then \( \mathbb{E}||\delta'||^2 = p||\delta||^2 \).
\end{thm}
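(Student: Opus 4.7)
The plan is to unpack both perturbations coordinatewise and then appeal to linearity of expectation. Following Example~\ref{ex:faults:checkpoint}, I write the full-recovery perturbation as $\delta = \seq^{(T)} - \seq^{(C)} \in \R^d$, with components $\delta_i$ for $i=1,\ldots,d$. Under partial recovery only the coordinates belonging to the failed PS nodes are rolled back to their values from the checkpoint, while the remaining coordinates keep their up-to-date values $\seq_i^{(T)}$. Thus, if $S \subseteq \{1,\ldots,d\}$ denotes the (random) set of lost coordinates,
\begin{equation*}
\delta'_i \;=\; \begin{cases} \delta_i & \text{if } i \in S, \\ 0 & \text{if } i \notin S. \end{cases}
\end{equation*}
Equivalently, $\delta'_i = \delta_i \cdot \indicator[i \in S]$.

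Next I would compute $\norm{\delta'}^2$ directly. Since $\indicator[i \in S]^2 = \indicator[i \in S]$,
\begin{equation*}
\norm{\delta'}^2 \;=\; \sum_{i=1}^{d} \delta_i^2 \, \indicator[i \in S].
\end{equation*}
Taking expectation over the random choice of $S$ and using linearity of expectation,
\begin{equation*}
\E\norm{\delta'}^2 \;=\; \sum_{i=1}^{d} \delta_i^2 \, \pr(i \in S).
\end{equation*}
By the assumption that the lost parameters form a uniformly random subset of a $p$-fraction of all parameters, the marginal probability $\pr(i \in S)$ is exactly $p$ for every coordinate $i$ (this holds both under independent Bernoulli$(p)$ loss per coordinate and under a uniform random subset of fixed size $pd$). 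Substituting gives
\begin{equation*}
\E\norm{\delta'}^2 \;=\; p \sum_{i=1}^{d} \delta_i^2 \;=\; p \norm{\delta}^2.
\end{equation*}

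The argument is essentially a one-line calculation, so there is no real technical obstacle; the only subtlety is making the sampling model explicit enough that $\pr(i \in S) = p$ can be invoked cleanly. I would briefly justify this by noting that the random partitioning of parameters across PS nodes (discussed just before the theorem statement) makes the identity of each failed node independent of which coordinates are assigned to it, so the marginal loss probability per coordinate equals the overall fraction $p$ of lost parameters. With that in place, linearity of expectation finishes the proof.
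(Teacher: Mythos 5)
Your proof is correct and follows essentially the same route as the paper's: write $\delta'_i = \delta_i\,\indicator[i\in S]$, expand $\norm{\delta'}^2$ coordinatewise, and use linearity of expectation together with $\pr(i\in S)=p$ to obtain $\E\norm{\delta'}^2 = p\norm{\delta}^2$. The only difference is cosmetic (the paper writes the perturbation as $z_S - x^{(T)}_S$ and you additionally note the two sampling models under which the marginal probability is $p$), so nothing further is needed.
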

\vspace{-1em}
\begin{proof}
See Appendix ~\ref{appendix:thm:partial-recovery-exp}.
\end{proof}
\vspace{-0.5em}
Thus, the expected size of perturbation incurred by partially restoring from a checkpoint decreases as the fraction of parameters lost decreases.

\subsection{Priority Checkpoint}
\label{sec:priority-checkpoint}

With the partial recovery strategy, we have shown that relaxing the consistency of checkpoint recovery can reduce the size of perturbations (i.e. $\delta_{k}$) experienced by the training algorithm due to a failure, and thus reduce the iteration cost. In this section, we further consider relaxing the consistency of saving checkpoints by taking more frequent, partial checkpoints.

Rather than saving all parameters every \( C \) iterations, consider saving a fraction \( r < 1 \) of the parameters every \( rC \) iterations. A \textit{running checkpoint} is kept in persistent storage, which is initialized to the initial parameter values \( \param^{(0)} \) and updated each time a partial checkpoint is saved. At a given time, this checkpoint may consist of a mix of parameters saved during different iterations, and the choice of which subset of parameters to checkpoint can be controlled via system design. This strategy enables, e.g., \textit{prioritization} of which parameters are saved during each checkpoint so as to prioritize saving parameters that will minimize the size of the perturbation caused by a failure. To do this, we consider a simple heuristic: Save the parameters which have changed the most since they were previously saved.

The checkpoint period \( rC \) is chosen so that the number of parameters saved every \( C \) iterations remains roughly constant across different values of \( r \). As a result the prioritized checkpoint strategy writes the same amount of data per constant number of iterations to persistent storage as the full checkpoint strategy, while having more frequent opportunities to prioritize and save parameters to the running checkpoint. We evaluate the system overhead implications of this scheme in Section \ref{sec:system-overhead}.

\subsection{\systemname{} Architecture and Implementation}
\label{sec:system}

We implement our system, \systemname{}, using these two checkpoint-based fault tolerance strategies. \systemname{} is implemented as a PS architecture---the parameters of the ML model are randomly partitioned across PS nodes, while the input data is partitioned across worker nodes. During each iteration, the workers read values from the PS nodes, compute updates using their local input data, and send the updates to the PS nodes to be applied.

Figure~\ref{fig:system-architecture} illustrates the architecture of \systemname{}. A \textit{fault tolerance controller} runs as a separate service and consists of (1) a \textit{checkpoint coordinator} responsible for coordinating periodic checkpoints at a fixed time interval, and (2) a \textit{recovery coordinator} responsible for coordinating the failure recovery process whenever a failure is detected. The detection of failures is performed by a \textit{failure detector} service, which can leverage heartbeating mechanisms in existing systems for distributed consensus such as ZooKeeper~\citep{Hunt:2010:ZWC:1855840.1855851}. Checkpoints are saved to shared persistent storage, such as distributed filesystems like NFS~\citep{Sandberg:1988:DIS:59309.59338}, CephFS~\citep{Weil:2006:CSH:1298455.1298485}, or distributed databases like Cassandra~\citep{Lakshman:2010:CDS:1773912.1773922}. To speed up distance calculations between the current and previously saved parameters, each PS node keeps an in-memory cache of the current checkpoint, which is updated whenever a new partial checkpoint is saved.

\begin{figure}
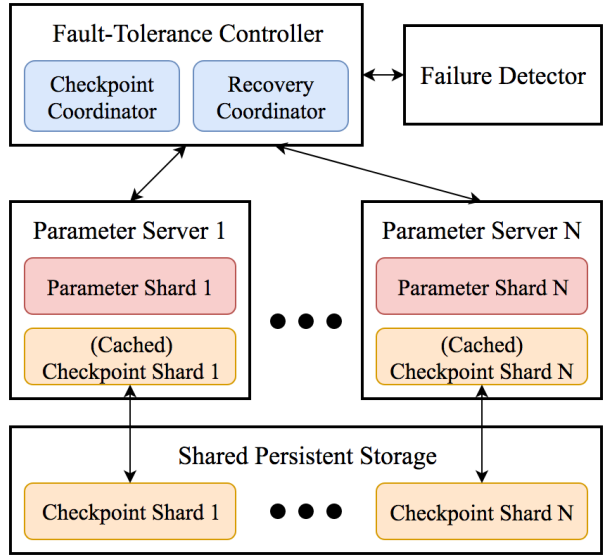

\centering{
\includegraphics[width=0.5\columnwidth]{img/{{system-architecture}}}}
\caption{\systemname{} system architecture for partial recovery and prioritized checkpoints in distributed model training. Details in Section \ref{sec:system}.}
\label{fig:system-architecture}
\end{figure}

When a checkpoint is triggered:
\begin{enumerate}
\item The checkpoint coordinator sends a message to each PS node, which computes the distance of each of its parameters from their previously saved values in the running checkpoint using its in-memory cache.
\item Each PS node sends its model parameter IDs and computed distances to the checkpoint coordinator.
\item Upon receipt of the computed distances from all PS nodes, the checkpoint coordinator selects the fraction $r$ of parameters with the largest distances, and sends their IDs back to their corresponding PS nodes.
\item Upon receipt of the parameter IDs, each PS node updates its in-memory cache, and saves those parameters to the shared persistent storage.
\end{enumerate}
During step 4, the training algorithm can be resumed as soon as the in-memory caches have been updated, while output to the shared persistent storage happens asynchronously in the background. Thus, the checkpointing overhead \( T_{\textup{dump}} \) in \systemname{} is just the time needed for prioritizing parameters and updating the in-memory cache.

When a failure is detected:
\begin{enumerate}
\item The failure detector notifies the recovery coordinator, which determines how the parameters belonging to the failed PS nodes should be re-partitioned.
\item The recovery coordinator partitions and sends the failed parameter IDs to the remaining PS ndoes, which re-load the parameters from the current running checkpoint in shared persistent storage.
\end{enumerate}

\systemname{} is implemented using C++ and leverages an existing elastic ML framework~\citep{216041}, which provides mechanisms for transparently re-routing requests from workers away from failed PS nodes, as well as for new PS nodes to join the active training job, replacing the old failed PS nodes.

\section{Experiments}
\label{sec:exp}

With our evaluation, we wish to (1) illustrate our iteration cost bounds for different types of perturbations using practical ML models, (2) empirically measure the iteration costs of a variety of models under the partial recovery and prioritized checkpoint strategies in \systemname{}, and (3) show that \systemname{} has low performance overhead.

\subsection{Models and Datasets}
\label{sec:models}

We use several popular models as examples for our analysis and checkpoint strategies. We describe their training algorithms, datasets, and parameter partitioning schemes below, and refer to Appendix \ref{appendix:experiments} for more details.

\paragraph{Multinomial Logistic Regression (MLR).} We use the standard stochastic gradient descent approach to minimize the multi-logit loss. The model parameters are an \( M \times N \) matrix of real numbers, where \( M \) is the dimensionality of the data, and \( N \) is the number of output classes. When distributed, the rows of the parameter matrix are randomly partitioned. We train MLR on the MNIST~\citep{726791} and CoverType~\citep{Dua:2017} datasets.

\paragraph{Matrix Factorization (MF).} We use the standard alternating least squares (ALS) approach to minimize the objective function. The model parameters are matrices \( L \in \mathbb{R}^{m \times p} \) and \( R \in \mathbb{R}^{p \times n} \). When distributed, the rows of \( L \) and the columns of \( R \) are randomly partitioned. We train MF on the MovieLens~\citep{Harper:2015:MDH:2866565.2827872} and Jester~\citep{Goldberg:2001:ECT:593963.594023} datasets.

\paragraph{Latent Dirichlet Allocation (LDA).} We use the standard collapsed Gibbs sampling~\citep{10.2307/2290921} approach to learn the model parameters, which are the document-topic and word-topic distributions. We use a scaled total variation between document-topic distributions as the norm for computing distances between parameters. When distributed, the document-topic distributions are randomly partitioned across nodes. We do not consider loss of word-topic distributions because they can be re-generated from the latent token-topic assignments. More details on this setup are in Appendix \ref{appendix:experiments}. We train LDA on the 20 Newsgroups~\citep{Lang95} and Reuters~\citep{Lewis:2004:RNB:1005332.1005345} datasets.

\paragraph{Convolutional Neural Network (CNN).} We train a network consisting of 2 convolution layers with ReLU activations~\citep{Nair:2010:RLU:3104322.3104425} and max pooling followed by 3 fully-connected layers with ReLU activations using Adam~\citep{DBLP:journals/corr/KingmaB14}. Because of the structure in neural network models, we consider two different partitioning strategies: 1) In \textit{by-layer} partitioning, we assume that the layers of the network are randomly partitioned across nodes; and 2) In \textit{by-shard} partitioning, we further divide each layer's parameters into shards, and all shards are randomly partitioned across nodes. We train this CNN on the MNIST~\citep{726791} dataset.

\subsection{Iteration Cost Bounds}

\begin{figure}
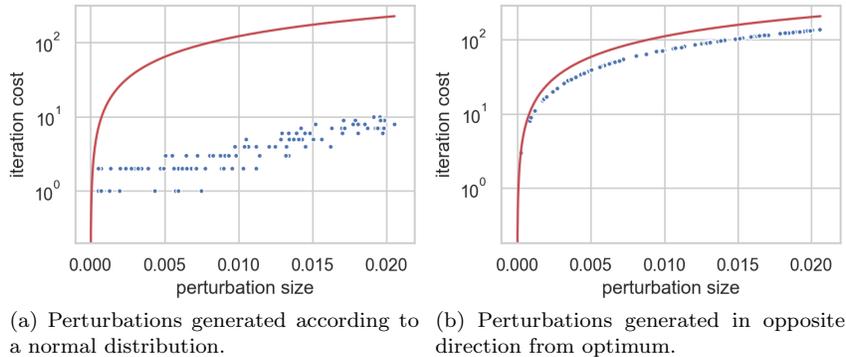

\centering
\subfloat[Perturbations generated according to a normal distribution.]{\includegraphics[trim={12 12 12 12},clip,width=0.33\textwidth]{img/{{cost_single}}}}
\;
\subfloat[Perturbations generated in opposite direction from optimum.]{\includegraphics[trim={12 12 12 12},clip,width=0.33\textwidth]{img/{{cost_single_adv}}}}
\;
\caption{Iteration costs of MLR on MNIST for (a) random perturbations and (b) adversarial perturbations. In each trial, a single perturbation is generated at iteration 50. The red line is the upper bound according to Theorem~\ref{thm:main}. The value of \( c \) is determined empirically, and the value of \( \epsilon \) is set so that an unperturbed trial converges in roughly 100 iterations.}
\label{fig:mlr-cost}
\end{figure}

\begin{figure}
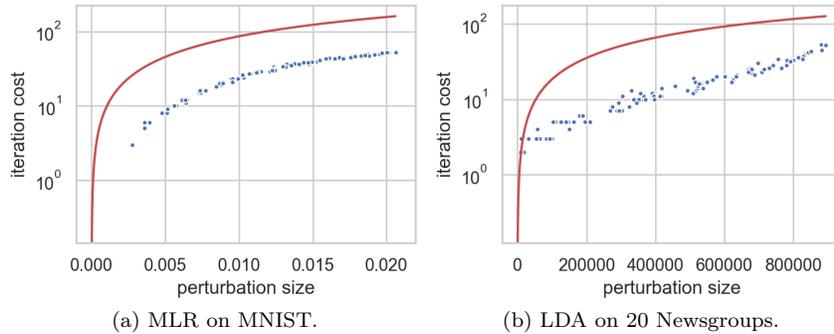

\centering
\subfloat[MLR on MNIST.]{\includegraphics[trim={12 12 12 12},clip,width=0.33\textwidth]{img/{{cost_single_fail}}}}
\;
\subfloat[LDA on 20 Newsgroups.]{\includegraphics[trim={12 12 12 12},clip,width=0.33\textwidth]{img/{{cost_single_20news}}}}
\caption{Perturbations are generated by resetting a random fraction of parameters back to their initial values, for both (a) MLR and (b) LDA. Other settings are the same as Figure \ref{fig:mlr-cost}.}
\label{fig:fail-cost}
\end{figure}

To illustrate the behavior of the iteration cost and to verify Theorem~\ref{thm:main} for different types of models and perturbations, we train MLR and LDA and generate a perturbation according to one of three types: random, adversarial, and resets.

For random perturbations (Figure \ref{fig:mlr-cost}(a)), the iteration cost bound is a loose upper bound on the actual iteration cost. This is in contrast to the simpler quadratic program (QP) experiments shown in Figure \ref{fig:qp-bounds}, in which the bound is relatively tight. On the other hand, we also do not observe any perturbations resulting in a negative iteration cost as for QP. This experiment shows that for MLR, a perturbation in a random direction is unlikely to greatly impact the total number of iterations to convergence.

We run a second experiment in which we generate ``adversarial'' perturbations opposite the direction of convergence (Figure \ref{fig:mlr-cost}(b)). In this case, we see that our bound is much closer to the actual iteration costs, indicating that it is still a tight worst-case upper bound on the iteration cost for MLR.

While Figure \ref{fig:mlr-cost} shows the iteration costs for synthetically generated perturbations, Figure \ref{fig:fail-cost} generates more realistic perturbations for both MLR and LDA. We generate perturbations by resetting a random subset of model parameters back to their initial values. This scheme simulates the type of perturbations the training algorithm would observe in the partial recovery scenario described in Section \ref{sec:partial-recovery}. In this case, we see that the behavior of actual iteration costs is closer to the scenario with adversarial perturbations, although not quite as costly.

\subsection{Partial Recovery}

\begin{figure*}
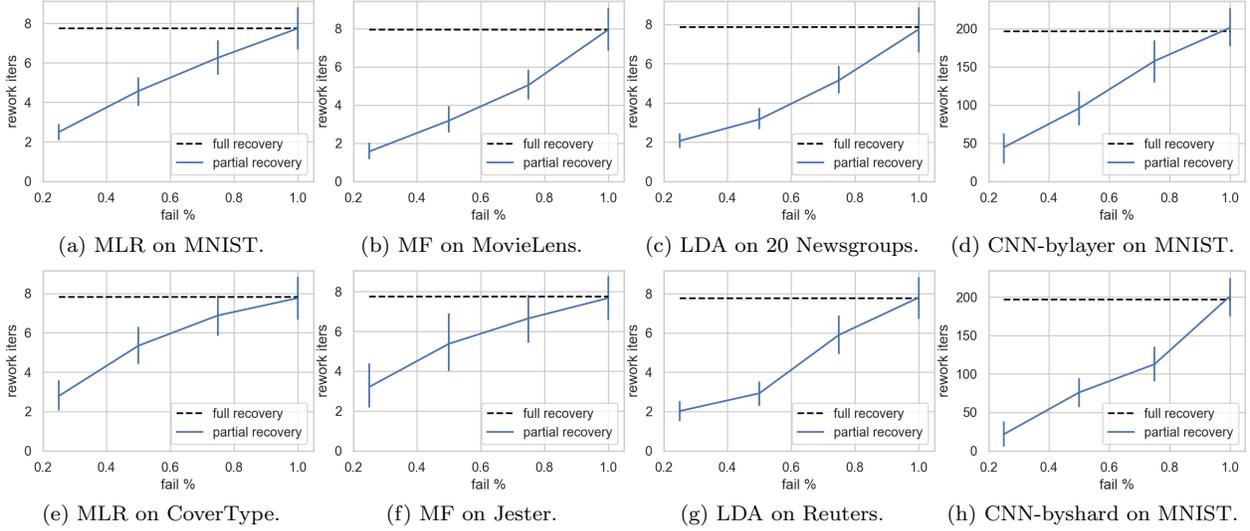

\centering
\subfloat[MLR on MNIST.]{\includegraphics[trim={12 12 12 12},clip,width=0.25\textwidth]{img/{{byfail-sampled.mlr.mnist}}}}
\subfloat[MF on MovieLens.]{\includegraphics[trim={12 12 12 12},clip,width=0.25\textwidth]{img/{{byfail-sampled.mf.movielens}}}}
\subfloat[LDA on 20 Newsgroups.]{\includegraphics[trim={12 12 12 12},clip,width=0.25\textwidth]{img/{{byfail-sampled.lda_tv_randinit.20news}}}}
\subfloat[CNN-bylayer on MNIST.]{\includegraphics[trim={12 12 12 12},clip,width=0.25\textwidth]{img/{{byfail-sampled.cnn_bylayer.mnist-adam}}}}
\\[-8pt]
\subfloat[MLR on CoverType.]{\includegraphics[trim={12 12 12 12},clip,width=0.25\textwidth]{img/{{byfail-sampled.mlr.covtype}}}}
\subfloat[MF on Jester.]{\includegraphics[trim={12 12 12 12},clip,width=0.25\textwidth]{img/{{byfail-sampled.mf_randinit.jester3}}}}
\subfloat[LDA on Reuters.]{\includegraphics[trim={12 12 12 12},clip,width=0.25\textwidth]{img/{{byfail-sampled.lda_tv_randinit.reuters}}}}
\subfloat[CNN-byshard on MNIST.]{\includegraphics[trim={12 12 12 12},clip,width=0.25\textwidth]{img/{{byfail-sampled.cnn_byshard.mnist-adam}}}}
\\[-8pt]
\caption{Partial vs. full recovery for a variety of models and datasets, where the set of failed parameters are selected uniformly at random. The $x$-axis shows the fraction of failed parameters, and the $y$-axis shows the number of rework iterations. The error bars indicate 95\% confidence intervals, calculated by repeating each trial 100 times.}
\label{fig:partial_recovery}
\end{figure*}

To empirically characterize the behavior of partial recovery from checkpoints, we simulate failures of varying fractions of model parameters for the MLR, MF, LDA, and CNN models. We compare the iteration costs incurred by full recovery with the iteration costs incurred by partial recovery. For each model, we sample the failure iteration from a geometric distribution, which causes the loss of a subset of model parameters chosen uniformly at random.

Fig.~\ref{fig:partial_recovery} shows the results. For all models and datasets, we see the average iteration cost incurred by partial recovery decreases as the failure fraction decreases. Meanwhile, the average iteration cost incurred by full recovery remains constant at its maximum value, since all parameters are loaded from the checkpoint regardless of which are actually lost.

Across all models and datasets tested, \systemname{} with partial recovery reduces the iteration cost by \( 12 \)\%--\( 42 \)\% for \( 3/4 \) failures, \( 31 \)\%--\( 62 \)\% for \( 1/2 \) failures, and \( 59 \)\%--\( 89 \)\% for \( 1/4 \) failures.

\subsection{Priority Checkpoint}

\begin{figure*}
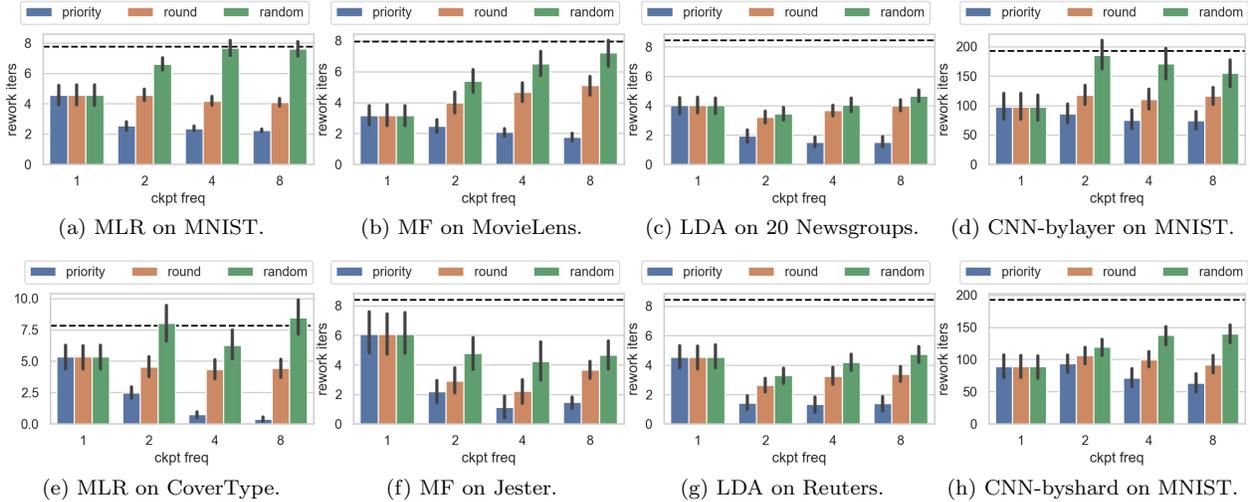

\centering
\subfloat[MLR on MNIST.]{\includegraphics[trim={12 12 12 20},clip,width=0.25\textwidth]{img/{{byckpt.mlr.mnist}}}}
\subfloat[MF on MovieLens.]{\includegraphics[trim={12 12 12 20},clip,width=0.25\textwidth]{img/{{byckpt.mf.movielens}}}}
\subfloat[LDA on 20 Newsgroups.]{\includegraphics[trim={12 12 12 20},clip,width=0.25\textwidth]{img/{{byckpt.lda_tv_randinit.20news}}}}
\subfloat[CNN-bylayer on MNIST.]{\includegraphics[trim={12 12 12 20},clip,width=0.25\textwidth]{img/{{byckpt.cnn_bylayer.mnist-adam}}}}
\\[-8pt]
\subfloat[MLR on CoverType.]{\includegraphics[trim={12 12 12 20},clip,width=0.25\textwidth]{img/{{byckpt.mlr.covtype}}}}
\subfloat[MF on Jester.]{\includegraphics[trim={12 12 12 20},clip,width=0.25\textwidth]{img/{{byckpt.mf_randinit.jester3}}}}
\subfloat[LDA on Reuters.]{\includegraphics[trim={12 12 12 20},clip,width=0.25\textwidth]{img/{{byckpt.lda_tv_randinit.reuters}}}}
\subfloat[CNN-byshard on MNIST.]{\includegraphics[trim={12 12 12 20},clip,width=0.25\textwidth]{img/{{byckpt.cnn_byshard.mnist-adam}}}}
\\[-8pt]
\caption{Prioritized checkpoint experiments comparing between the random, round-robin, and priority strategies. The $x$-axis indicated checkpoint frequency relative to full checkpoints, where \( 1 \) indicates full checkpoints, \( 2 \) indicates \( 1/2 \) checkpoints at \( 2\times \) frequency, etc., and the $y$-axis shows the number of rework iterations. The error bars indicate 95\% confidence intervals, calculated by repeating each trial 100 times, and the dashed black line represents the rework cost of a full checkpoint.}
\label{fig:partial_checkpoint}
\end{figure*}

In this section, we evaluate the effectiveness of our priority checkpoint strategy for the MLR, MF, LDA, and CNN models. We compare the iteration costs incurred by different fractions of partial checkpoints, while keeping constant the number of parameters saved per constant number of iterations, as described in Section \ref{sec:priority-checkpoint}. As before, we sample the failure iteration from a geometric distribution. In this experiment keep the fraction of lost parameters fixed at \( 1/2 \).

To gauge the effectiveness of prioritization, we compare between the following strategies:
\begin{enumerate}
\item \texttt{priority}: Parameters saved to checkpoint are selected based on the prioritization described in Section \ref{sec:priority-checkpoint}.
\item \texttt{round}: Parameters saved to checkpoint are selected in a round-robin manner.
\item \texttt{random}: Parameters saved to checkpoint are selected uniformly at random.
\end{enumerate}
Fig.~\ref{fig:partial_checkpoint} shows the results. For all models and datasets, we see the \texttt{priority} strategy results in decreasing iteration costs when the fraction of each checkpoint decreases (and frequency of checkpoints increases). On the other hand, the \texttt{round} strategy either reduces or increases the iteration cost depending on the model and dataset, while the \texttt{random} strategy nearly always increases the iteration cost.

Across all models and datasets tested, combining partial recovery with prioritized \( 1/8 \)th checkpoints at 8\(\times \) frequency reduces the iteration cost of losing \( 1/2 \) of all model parameters by \( 78 \)\%--\( 95 \)\% when compared with traditional checkpoint recovery.

\subsection{System Overhead}
\label{sec:system-overhead}

Lastly, we evaluate the system overhead of \systemname{} by training LDA on a 12GB subset of the ClueWeb12 dataset \citep{clueweb12}. The dataset contains 480K documents and 2B tokens, and the number of topics is set to 1K. We use four AWS i3.2xlarge instances, each with 1.9TB NVMe SSDs. For persistent storage, we install a CephFS on these machines.

We trigger a failure of \( 1/2 \) of model parameters during the \( 7 \)th iteration, and compare SCAR using \( 1/4 \)th checkpoints with the traditional full checkpoint-recovery mechanism. Figure \ref{fig:cf-conv} shows the convergence plots. Using \systemname{}, the same likelihood value is reached roughly 3 iterations sooner than using traditional checkpoint-recovery. Each iteration takes \( \approx 243 \) seconds, and \systemname{} spends an extra \( \approx 13 \) seconds for checkpointing after each iteration. Overall, performance overhead of checkpointing in \systemname{} is small in comparison to the run-time of the training job, and results in a net reduction of \( T_{\textup{rework}} \approx 6 \) min in rework time incurred due to the failure. In dynamic-resource shared computing environments, this extra time can be leveraged by a scheduler to make more fine-grained resource allocation decisions between competing jobs.

\begin{figure}
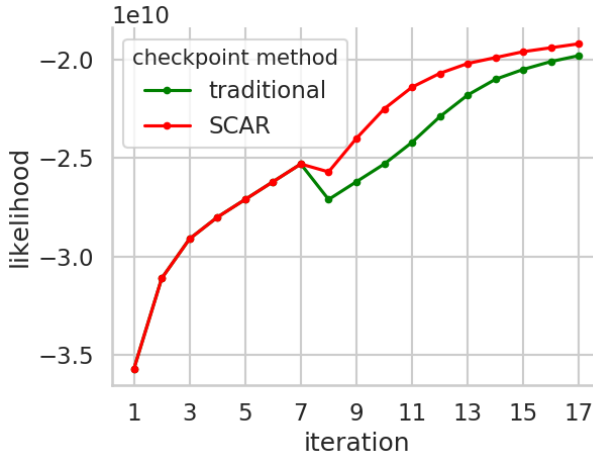

\centering{
\includegraphics[width=0.5\columnwidth]{img/{{lda}}}}
\caption{\systemname{} vs traditional checkpoint-recovery for LDA on ClueWeb. Using \systemname{}, we save \( 1/4 \) of model parameters every iteration. Using the traditional method, we save all model parameters every 4 iterations. SCAR reaches the same likelihood value roughly 3 iterations sooner, saving \( \approx 6 \) min.}
\label{fig:cf-conv}
\end{figure}

\section{Discussion and Related Work}
\label{sec:disc}

The model \eqref{eq:gen:iter} is closely related to several models in the literature. As discussed in Remark~\ref{rem:pgd}, perturbed gradient descent \citep{jin2017} is (formally) a special case of \eqref{eq:gen:iter}, however, the motivations are quite different. \citet{mania2015} and \citet{elgamal2017} also consider a model similar to \eqref{eq:gen:iter}, however, perturbations are only added to the gradients. 
Neither of these works consider perturbations in both the gradients and the current state $\seq^{(k)}$, as we do.

A related body of work is distributed training under Byzantine faults \citep{blanchard2017,chen2017,damaskinos2018,guerraoui2018hidden}, where a proportion of machines may act adversarially. Byzantine failures are one of the most general assumptions on failures, and thus a Byzantine fault-tolerant training system is naturally tolerant to many other types of faults and perturbations. 
However, perturbations to parameters during training are not always Byzantine, and can often be controlled via system implementations, such as bounded staleness consistency models, or partial recovery and prioritized checkpointing as in the present work. 

In existing distributed ML systems, the fault tolerance problem is approached from an ML-agnostic perspective. For example, TensorFlow~\citep{199317} offers recovery from periodic checkpoints, while the parameter server of \citet{186214} offers live replication of parameter values. Proteus~\citep{Harlap:2017:PAM:3064176.3064182} proposes an approach for fault-tolerance on transient machines by using more reliable machines for active backup of program state. In comparison, our system takes advantage of the self-correcting nature of ML, offering lower iteration cost compared with traditional checkpoint-restart, and without the performance overhead of live replication or storing parameter state on designated reliable machines.

\section{Conclusions and Future Work}
\label{sec:conc}

The self-correcting behavior of ML forms the basis of system techniques that allow model training to achieve adaptability and efficiency in unreliable and resource-limited environments. In this paper, we outlined a general approach to design such systems by reducing the sizes of perturbations to model parameters. We derived an upper bound on the iteration cost of perturbations which can guide the design of new systems. We then proposed and implemented new strategies for checkpoint-based fault tolerance in our system \systemname{}. We showed that \systemname{} is able to reduce the iteration cost of failures by an order of magnitude or more when compared to traditional checkpoint-based fault tolerance.

As for future work, we have already observed that our main assumptions \eqref{assm:seq:stepbound} and \eqref{assm:fseq:stepbound} can be relaxed, however, it remains to study these generalizations in more detail. In particular, it would be interesting to study the case of nonconvex $\ell$ (Example~\ref{ex:analysis:nonconvex}) more carefully in addition to sublinear schemes such as SGD (Example~\ref{ex:analysis:sgd}). Furthermore, we have avoided making assumptions on the perturbations $\delta_{k}$, however, by imposing additional assumptions on the frequency or size of these perturbations, one could derive tighter upper bounds on the iteration cost.

On the systems side, 
there exists opportunities for systems to more directly utilize Theorem \ref{thm:main}. By approximating \( c \) and \( \norm{x^{(0)} - \opt} \), we may obtain a predictive model which can be evaluated on-the-fly to inform decisions made by a system during run-time. Furthermore, it would be interesting to see how the strategies used in \systemname{} can be applied to non-PS architectures such as all-reduce, which has recently become popular for distributed training \citep{DBLP:journals/corr/abs-1802-05799}.

\bibliographystyle{plainnat}
\bibliography{references}

\begin{thebibliography}{54}
\providecommand{\natexlab}[1]{#1}
\providecommand{\url}[1]{\texttt{#1}}
\expandafter\ifx\csname urlstyle\endcsname\relax
  \providecommand{\doi}[1]{doi: #1}\else
  \providecommand{\doi}{doi: \begingroup \urlstyle{rm}\Url}\fi

\bibitem[Abadi et~al.(2016)Abadi, Barham, Chen, Chen, Davis, Dean, Devin,
  Ghemawat, Irving, Isard, Kudlur, Levenberg, Monga, Moore, Murray, Steiner,
  Tucker, Vasudevan, Warden, Wicke, Yu, and Zheng]{199317}
Mart{\'\i}n Abadi, Paul Barham, Jianmin Chen, Zhifeng Chen, Andy Davis, Jeffrey
  Dean, Matthieu Devin, Sanjay Ghemawat, Geoffrey Irving, Michael Isard,
  Manjunath Kudlur, Josh Levenberg, Rajat Monga, Sherry Moore, Derek~G. Murray,
  Benoit Steiner, Paul Tucker, Vijay Vasudevan, Pete Warden, Martin Wicke, Yuan
  Yu, and Xiaoqiang Zheng.
\newblock Tensorflow: A system for large-scale machine learning.
\newblock In \emph{12th {USENIX} Symposium on Operating Systems Design and
  Implementation ({OSDI} 16)}, pages 265--283, Savannah, GA, 2016. {USENIX}
  Association.
\newblock ISBN 978-1-931971-33-1.

\bibitem[Attouch et~al.(2010)Attouch, Bolte, Redont, and
  Soubeyran]{attouch2010proximal}
H{\'e}dy Attouch, J{\'e}r{\^o}me Bolte, Patrick Redont, and Antoine Soubeyran.
\newblock Proximal alternating minimization and projection methods for
  nonconvex problems: An approach based on the kurdyka-{\l}ojasiewicz
  inequality.
\newblock \emph{Mathematics of Operations Research}, 35\penalty0 (2):\penalty0
  438--457, 2010.

\bibitem[Blanchard et~al.(2017)Blanchard, Guerraoui, Stainer,
  et~al.]{blanchard2017}
Peva Blanchard, Rachid Guerraoui, Julien Stainer, et~al.
\newblock Machine learning with adversaries: Byzantine tolerant gradient
  descent.
\newblock In \emph{Advances in Neural Information Processing Systems}, pages
  118--128, 2017.

\bibitem[Bottou et~al.(2016)Bottou, Curtis, and Nocedal]{bottou2016}
L{\'e}on Bottou, Frank~E Curtis, and Jorge Nocedal.
\newblock Optimization methods for large-scale machine learning.
\newblock \emph{arXiv preprint arXiv:1606.04838}, 2016.

\bibitem[Chen et~al.(2017)Chen, Su, and Xu]{chen2017}
Yudong Chen, Lili Su, and Jiaming Xu.
\newblock Distributed statistical machine learning in adversarial settings:
  Byzantine gradient descent.
\newblock \emph{arXiv preprint arXiv:1705.05491}, 2017.

\bibitem[Cipar et~al.(2013)Cipar, Ho, Kim, Lee, Ganger, Gibson, Keeton, and
  Xing]{cipar-stragglers}
James Cipar, Qirong Ho, Jin~Kyu Kim, Seunghak Lee, Gregory~R. Ganger, Garth
  Gibson, Kimberly Keeton, and Eric Xing.
\newblock Solving the straggler problem with bounded staleness.
\newblock In \emph{Proceedings of the 14th USENIX Conference on Hot Topics in
  Operating Systems}, HotOS'13, pages 22--22, Berkeley, CA, USA, 2013. USENIX
  Association.

\bibitem[Courbariaux et~al.(2014)Courbariaux, Bengio, and
  David]{DBLP:journals/corr/CourbariauxBD14}
Matthieu Courbariaux, Yoshua Bengio, and Jean{-}Pierre David.
\newblock Low precision arithmetic for deep learning.
\newblock \emph{CoRR}, abs/1412.7024, 2014.

\bibitem[Cui et~al.(2014)Cui, Cipar, Ho, Kim, Lee, Kumar, Wei, Dai, Ganger,
  Gibbons, Gibson, and Xing]{cui-staleness}
Henggang Cui, James Cipar, Qirong Ho, Jin~Kyu Kim, Seunghak Lee, Abhimanu
  Kumar, Jinliang Wei, Wei Dai, Gregory~R. Ganger, Phillip~B. Gibbons, Garth~A.
  Gibson, and Eric~P. Xing.
\newblock Exploiting bounded staleness to speed up big data analytics.
\newblock In \emph{Proceedings of the 2014 USENIX Conference on USENIX Annual
  Technical Conference}, USENIX ATC'14, pages 37--48, Berkeley, CA, USA, 2014.
  USENIX Association.
\newblock ISBN 978-1-931971-10-2.

\bibitem[Dai et~al.(2015)Dai, Kumar, Wei, Ho, Gibson, and
  Xing]{Dai:2015:HDM:2887007.2887019}
Wei Dai, Abhimanu Kumar, Jinliang Wei, Qirong Ho, Garth Gibson, and Eric~P.
  Xing.
\newblock High-performance distributed ml at scale through parameter server
  consistency models.
\newblock In \emph{Proceedings of the Twenty-Ninth AAAI Conference on
  Artificial Intelligence}, AAAI'15, pages 79--87. AAAI Press, 2015.
\newblock ISBN 0-262-51129-0.

\bibitem[Daly(2006)]{daly-optimum-checkpoint}
J.~T. Daly.
\newblock A higher order estimate of the optimum checkpoint interval for
  restart dumps.
\newblock \emph{Future Gener. Comput. Syst.}, 22\penalty0 (3):\penalty0
  303--312, February 2006.
\newblock ISSN 0167-739X.
\newblock \doi{10.1016/j.future.2004.11.016}.

\bibitem[Damaskinos et~al.(2018)Damaskinos, Mhamdi, Guerraoui, Patra, and
  Taziki]{damaskinos2018}
Georgios Damaskinos, El~Mahdi~El Mhamdi, Rachid Guerraoui, Rhicheek Patra, and
  Mahsa Taziki.
\newblock Asynchronous {B}yzantine machine learning (the case of {SGD}).
\newblock In Jennifer Dy and Andreas Krause, editors, \emph{Proceedings of the
  35th International Conference on Machine Learning}, volume~80 of
  \emph{Proceedings of Machine Learning Research}, pages 1145--1154,
  Stockholmsmässan, Stockholm Sweden, 10--15 Jul 2018. PMLR.

\bibitem[Dean et~al.(2012)Dean, Corrado, Monga, Chen, Devin, Le, Mao, Ranzato,
  Senior, Tucker, Yang, and Ng]{Dean:2012:LSD:2999134.2999271}
Jeffrey Dean, Greg~S. Corrado, Rajat Monga, Kai Chen, Matthieu Devin, Quoc~V.
  Le, Mark~Z. Mao, Marc'Aurelio Ranzato, Andrew Senior, Paul Tucker, Ke~Yang,
  and Andrew~Y. Ng.
\newblock Large scale distributed deep networks.
\newblock In \emph{Proceedings of the 25th International Conference on Neural
  Information Processing Systems - Volume 1}, NIPS'12, pages 1223--1231, USA,
  2012. Curran Associates Inc.

\bibitem[Dheeru and Karra~Taniskidou(2017)]{Dua:2017}
Dua Dheeru and Efi Karra~Taniskidou.
\newblock {UCI} machine learning repository, 2017.

\bibitem[Du et~al.(2017)Du, Jin, Lee, Jordan, Poczos, and Singh]{du2017}
Simon~S Du, Chi Jin, Jason~D Lee, Michael~I Jordan, Barnabas Poczos, and Aarti
  Singh.
\newblock Gradient descent can take exponential time to escape saddle points.
\newblock \emph{arXiv preprint arXiv:1705.10412}, 2017.

\bibitem[El~Gamal and Lai(2017)]{elgamal2017}
Mostafa El~Gamal and Lifeng Lai.
\newblock On randomized distributed coordinate descent with quantized updates.
\newblock In \emph{Information Sciences and Systems (CISS), 2017 51st Annual
  Conference on}, pages 1--5. IEEE, 2017.

\bibitem[Gabrilovich et~al.(2013)Gabrilovich, Ringgaard, and
  Subramanya]{clueweb12}
Evgeniy Gabrilovich, Michael Ringgaard, and Amarnag Subramanya.
\newblock Facc1: Freebase annotation of clueweb corpora, version 1 (release
  date 2013-06-26, format version 1, correction level 0).
\newblock http://lemurproject.org/clueweb12/, 2013.

\bibitem[Ge et~al.(2015)Ge, Huang, Jin, and Yuan]{ge2015saddle}
Rong Ge, Furong Huang, Chi Jin, and Yang Yuan.
\newblock Escaping from saddle points --- online stochastic gradient for tensor
  decomposition.
\newblock In Peter Gr{\"u}nwald, Elad Hazan, and Satyen Kale, editors,
  \emph{Proceedings of The 28th Conference on Learning Theory}, volume~40 of
  \emph{Proceedings of Machine Learning Research}, pages 797--842, Paris,
  France, 03--06 Jul 2015. PMLR.

\bibitem[Goldberg et~al.(2001)Goldberg, Roeder, Gupta, and
  Perkins]{Goldberg:2001:ECT:593963.594023}
Ken Goldberg, Theresa Roeder, Dhruv Gupta, and Chris Perkins.
\newblock Eigentaste: A constant time collaborative filtering algorithm.
\newblock \emph{Inf. Retr.}, 4\penalty0 (2):\penalty0 133--151, July 2001.
\newblock ISSN 1386-4564.
\newblock \doi{10.1023/A:1011419012209}.

\bibitem[Guerraoui et~al.(2018)Guerraoui, Rouault, et~al.]{guerraoui2018hidden}
Rachid Guerraoui, S{\'e}bastien Rouault, et~al.
\newblock The hidden vulnerability of distributed learning in byzantium.
\newblock In \emph{International Conference on Machine Learning}, pages
  3518--3527, 2018.

\bibitem[Gupta et~al.(2015)Gupta, Agrawal, Gopalakrishnan, and
  Narayanan]{DBLP:journals/corr/GuptaAGN15}
Suyog Gupta, Ankur Agrawal, Kailash Gopalakrishnan, and Pritish Narayanan.
\newblock Deep learning with limited numerical precision.
\newblock \emph{CoRR}, abs/1502.02551, 2015.

\bibitem[Harlap et~al.(2016)Harlap, Cui, Dai, Wei, Ganger, Gibbons, Gibson, and
  Xing]{harlap-stragglers}
Aaron Harlap, Henggang Cui, Wei Dai, Jinliang Wei, Gregory~R. Ganger,
  Phillip~B. Gibbons, Garth~A. Gibson, and Eric~P. Xing.
\newblock Addressing the straggler problem for iterative convergent parallel
  ml.
\newblock In \emph{Proceedings of the Seventh ACM Symposium on Cloud
  Computing}, SoCC '16, pages 98--111, New York, NY, USA, 2016. ACM.
\newblock ISBN 978-1-4503-4525-5.
\newblock \doi{10.1145/2987550.2987554}.

\bibitem[Harlap et~al.(2017)Harlap, Tumanov, Chung, Ganger, and
  Gibbons]{Harlap:2017:PAM:3064176.3064182}
Aaron Harlap, Alexey Tumanov, Andrew Chung, Gregory~R. Ganger, and Phillip~B.
  Gibbons.
\newblock Proteus: Agile ml elasticity through tiered reliability in dynamic
  resource markets.
\newblock In \emph{Proceedings of the Twelfth European Conference on Computer
  Systems}, EuroSys '17, pages 589--604, New York, NY, USA, 2017. ACM.
\newblock ISBN 978-1-4503-4938-3.
\newblock \doi{10.1145/3064176.3064182}.

\bibitem[Harper and Konstan(2015)]{Harper:2015:MDH:2866565.2827872}
F.~Maxwell Harper and Joseph~A. Konstan.
\newblock The movielens datasets: History and context.
\newblock \emph{ACM Trans. Interact. Intell. Syst.}, 5\penalty0 (4):\penalty0
  19:1--19:19, December 2015.
\newblock ISSN 2160-6455.
\newblock \doi{10.1145/2827872}.

\bibitem[Higham(2002)]{Higham:2002:ASN:579525}
Nicholas~J. Higham.
\newblock \emph{Accuracy and Stability of Numerical Algorithms}.
\newblock Society for Industrial and Applied Mathematics, Philadelphia, PA,
  USA, 2nd edition, 2002.
\newblock ISBN 0898715210.

\bibitem[Hindman et~al.(2011)Hindman, Konwinski, Zaharia, Ghodsi, Joseph, Katz,
  Shenker, and Stoica]{mesos}
Benjamin Hindman, Andy Konwinski, Matei Zaharia, Ali Ghodsi, Anthony~D. Joseph,
  Randy Katz, Scott Shenker, and Ion Stoica.
\newblock Mesos: A platform for fine-grained resource sharing in the data
  center.
\newblock In \emph{Proceedings of the 8th USENIX Conference on Networked
  Systems Design and Implementation}, NSDI'11, pages 295--308, Berkeley, CA,
  USA, 2011. USENIX Association.

\bibitem[Ho et~al.(2013)Ho, Cipar, Cui, Kim, Lee, Gibbons, Gibson, Ganger, and
  Xing]{ho-ssp}
Qirong Ho, James Cipar, Henggang Cui, Jin~Kyu Kim, Seunghak Lee, Phillip~B.
  Gibbons, Garth~A. Gibson, Gregory~R. Ganger, and Eric~P. Xing.
\newblock More effective distributed ml via a stale synchronous parallel
  parameter server.
\newblock In \emph{Proceedings of the 26th International Conference on Neural
  Information Processing Systems - Volume 1}, NIPS'13, pages 1223--1231, USA,
  2013. Curran Associates Inc.

\bibitem[Hubara et~al.(2017)Hubara, Courbariaux, Soudry, El-Yaniv, and
  Bengio]{hubara-qnn}
Itay Hubara, Matthieu Courbariaux, Daniel Soudry, Ran El-Yaniv, and Yoshua
  Bengio.
\newblock Quantized neural networks: Training neural networks with low
  precision weights and activations.
\newblock \emph{J. Mach. Learn. Res.}, 18\penalty0 (1):\penalty0 6869--6898,
  January 2017.
\newblock ISSN 1532-4435.

\bibitem[Hunt et~al.(2010)Hunt, Konar, Junqueira, and
  Reed]{Hunt:2010:ZWC:1855840.1855851}
Patrick Hunt, Mahadev Konar, Flavio~P. Junqueira, and Benjamin Reed.
\newblock Zookeeper: Wait-free coordination for internet-scale systems.
\newblock In \emph{Proceedings of the 2010 USENIX Conference on USENIX Annual
  Technical Conference}, USENIXATC'10, pages 11--11, Berkeley, CA, USA, 2010.
  USENIX Association.

\bibitem[{Jia} et~al.(2018){Jia}, {Song}, {He}, {Wang}, {Rong}, {Zhou}, {Xie},
  {Guo}, {Yang}, {Yu}, {Chen}, {Hu}, {Shi}, and {Chu}]{2018arXiv180711205J}
X.~{Jia}, S.~{Song}, W.~{He}, Y.~{Wang}, H.~{Rong}, F.~{Zhou}, L.~{Xie},
  Z.~{Guo}, Y.~{Yang}, L.~{Yu}, T.~{Chen}, G.~{Hu}, S.~{Shi}, and X.~{Chu}.
\newblock {Highly Scalable Deep Learning Training System with Mixed-Precision:
  Training ImageNet in Four Minutes}.
\newblock \emph{ArXiv e-prints}, July 2018.

\bibitem[Jin et~al.(2017)Jin, Ge, Netrapalli, Kakade, and Jordan]{jin2017}
Chi Jin, Rong Ge, Praneeth Netrapalli, Sham~M Kakade, and Michael~I Jordan.
\newblock How to escape saddle points efficiently.
\newblock \emph{arXiv preprint arXiv:1703.00887}, 2017.

\bibitem[Kingma and Ba(2014)]{DBLP:journals/corr/KingmaB14}
Diederik~P. Kingma and Jimmy Ba.
\newblock Adam: {A} method for stochastic optimization.
\newblock \emph{CoRR}, abs/1412.6980, 2014.

\bibitem[Lakshman and Malik(2010)]{Lakshman:2010:CDS:1773912.1773922}
Avinash Lakshman and Prashant Malik.
\newblock Cassandra: A decentralized structured storage system.
\newblock \emph{SIGOPS Oper. Syst. Rev.}, 44\penalty0 (2):\penalty0 35--40,
  April 2010.
\newblock ISSN 0163-5980.
\newblock \doi{10.1145/1773912.1773922}.

\bibitem[Lang(1995)]{Lang95}
Ken Lang.
\newblock Newsweeder: Learning to filter netnews.
\newblock In \emph{Proceedings of the Twelfth International Conference on
  Machine Learning}, pages 331--339, 1995.

\bibitem[Lecun et~al.(1998)Lecun, Bottou, Bengio, and Haffner]{726791}
Y.~Lecun, L.~Bottou, Y.~Bengio, and P.~Haffner.
\newblock Gradient-based learning applied to document recognition.
\newblock \emph{Proceedings of the IEEE}, 86\penalty0 (11):\penalty0
  2278--2324, Nov 1998.
\newblock ISSN 0018-9219.
\newblock \doi{10.1109/5.726791}.

\bibitem[Lewis et~al.(2004)Lewis, Yang, Rose, and
  Li]{Lewis:2004:RNB:1005332.1005345}
David~D. Lewis, Yiming Yang, Tony~G. Rose, and Fan Li.
\newblock Rcv1: A new benchmark collection for text categorization research.
\newblock \emph{J. Mach. Learn. Res.}, 5:\penalty0 361--397, December 2004.
\newblock ISSN 1532-4435.

\bibitem[Li et~al.(2014{\natexlab{a}})Li, Andersen, Park, Smola, Ahmed,
  Josifovski, Long, Shekita, and Su]{186214}
Mu~Li, David~G. Andersen, Jun~Woo Park, Alexander~J. Smola, Amr Ahmed, Vanja
  Josifovski, James Long, Eugene~J. Shekita, and Bor-Yiing Su.
\newblock Scaling distributed machine learning with the parameter server.
\newblock In \emph{11th {USENIX} Symposium on Operating Systems Design and
  Implementation ({OSDI} 14)}, pages 583--598, Broomfield, CO,
  2014{\natexlab{a}}. {USENIX} Association.
\newblock ISBN 978-1-931971-16-4.

\bibitem[Li et~al.(2014{\natexlab{b}})Li, Andersen, Smola, and Yu]{li-ps-nips}
Mu~Li, David~G. Andersen, Alexander Smola, and Kai Yu.
\newblock Communication efficient distributed machine learning with the
  parameter server.
\newblock In \emph{Proceedings of the 27th International Conference on Neural
  Information Processing Systems - Volume 1}, NIPS'14, pages 19--27, Cambridge,
  MA, USA, 2014{\natexlab{b}}. MIT Press.

\bibitem[Liu(1994)]{10.2307/2290921}
Jun~S. Liu.
\newblock The collapsed gibbs sampler in bayesian computations with
  applications to a gene regulation problem.
\newblock \emph{Journal of the American Statistical Association}, 89\penalty0
  (427):\penalty0 958--966, 1994.
\newblock ISSN 01621459.

\bibitem[Low et~al.(2012)Low, Bickson, Gonzalez, Guestrin, Kyrola, and
  Hellerstein]{Low:2012:DGF:2212351.2212354}
Yucheng Low, Danny Bickson, Joseph Gonzalez, Carlos Guestrin, Aapo Kyrola, and
  Joseph~M. Hellerstein.
\newblock Distributed graphlab: A framework for machine learning and data
  mining in the cloud.
\newblock \emph{Proc. VLDB Endow.}, 5\penalty0 (8):\penalty0 716--727, April
  2012.
\newblock ISSN 2150-8097.
\newblock \doi{10.14778/2212351.2212354}.

\bibitem[Mania et~al.(2015)Mania, Pan, Papailiopoulos, Recht, Ramchandran, and
  Jordan]{mania2015}
Horia Mania, Xinghao Pan, Dimitris Papailiopoulos, Benjamin Recht, Kannan
  Ramchandran, and Michael~I Jordan.
\newblock Perturbed iterate analysis for asynchronous stochastic optimization.
\newblock \emph{arXiv preprint arXiv:1507.06970}, 2015.

\bibitem[Nair and Hinton(2010)]{Nair:2010:RLU:3104322.3104425}
Vinod Nair and Geoffrey~E. Hinton.
\newblock Rectified linear units improve restricted boltzmann machines.
\newblock In \emph{Proceedings of the 27th International Conference on
  International Conference on Machine Learning}, ICML'10, pages 807--814, USA,
  2010. Omnipress.
\newblock ISBN 978-1-60558-907-7.

\bibitem[Nemirovski et~al.(2009)Nemirovski, Juditsky, Lan, and
  Shapiro]{nemirovski2009robust}
Arkadi Nemirovski, Anatoli Juditsky, Guanghui Lan, and Alexander Shapiro.
\newblock Robust stochastic approximation approach to stochastic programming.
\newblock \emph{SIAM Journal on optimization}, 19\penalty0 (4):\penalty0
  1574--1609, 2009.

\bibitem[Nesterov(2013)]{nesterov2013book}
Yurii Nesterov.
\newblock \emph{Introductory lectures on convex optimization: A basic course},
  volume~87.
\newblock Springer Science \& Business Media, 2013.

\bibitem[Niu et~al.(2011)Niu, Recht, Re, and
  Wright]{Niu:2011:HLA:2986459.2986537}
Feng Niu, Benjamin Recht, Christopher Re, and Stephen~J. Wright.
\newblock Hogwild!: A lock-free approach to parallelizing stochastic gradient
  descent.
\newblock In \emph{Proceedings of the 24th International Conference on Neural
  Information Processing Systems}, NIPS'11, pages 693--701, USA, 2011. Curran
  Associates Inc.
\newblock ISBN 978-1-61839-599-3.

\bibitem[Qiao et~al.(2018)Qiao, Aghayev, Yu, Chen, Ho, Gibson, and
  Xing]{216041}
Aurick Qiao, Abutalib Aghayev, Weiren Yu, Haoyang Chen, Qirong Ho, Garth~A.
  Gibson, and Eric~P. Xing.
\newblock Litz: Elastic framework for high-performance distributed machine
  learning.
\newblock In \emph{2018 {USENIX} Annual Technical Conference ({USENIX} {ATC}
  18)}, pages 631--644, Boston, MA, 2018. {USENIX} Association.
\newblock ISBN 978-1-931971-44-7.

\bibitem[Rakhlin et~al.(2012)Rakhlin, Shamir, Sridharan,
  et~al.]{rakhlin2012making}
Alexander Rakhlin, Ohad Shamir, Karthik Sridharan, et~al.
\newblock Making gradient descent optimal for strongly convex stochastic
  optimization.
\newblock In \emph{ICML}, volume~12, pages 1571--1578. Citeseer, 2012.

\bibitem[Sandberg et~al.(1988)Sandberg, Golgberg, Kleiman, Walsh, and
  Lyon]{Sandberg:1988:DIS:59309.59338}
R.~Sandberg, D.~Golgberg, S.~Kleiman, D.~Walsh, and B.~Lyon.
\newblock Innovations in internetworking.
\newblock chapter Design and Implementation of the Sun Network Filesystem,
  pages 379--390. Artech House, Inc., Norwood, MA, USA, 1988.
\newblock ISBN 0-89006-337-0.

\bibitem[Sergeev and Balso(2018)]{DBLP:journals/corr/abs-1802-05799}
Alexander Sergeev and Mike~Del Balso.
\newblock Horovod: fast and easy distributed deep learning in tensorflow.
\newblock \emph{CoRR}, abs/1802.05799, 2018.

\bibitem[Vavilapalli et~al.(2013)Vavilapalli, Murthy, Douglas, Agarwal, Konar,
  Evans, Graves, Lowe, Shah, Seth, Saha, Curino, O'Malley, Radia, Reed, and
  Baldeschwieler]{yarn}
Vinod~Kumar Vavilapalli, Arun~C. Murthy, Chris Douglas, Sharad Agarwal, Mahadev
  Konar, Robert Evans, Thomas Graves, Jason Lowe, Hitesh Shah, Siddharth Seth,
  Bikas Saha, Carlo Curino, Owen O'Malley, Sanjay Radia, Benjamin Reed, and
  Eric Baldeschwieler.
\newblock Apache hadoop yarn: Yet another resource negotiator.
\newblock In \emph{Proceedings of the 4th Annual Symposium on Cloud Computing},
  SOCC '13, pages 5:1--5:16, New York, NY, USA, 2013. ACM.
\newblock ISBN 978-1-4503-2428-1.
\newblock \doi{10.1145/2523616.2523633}.

\bibitem[Wei et~al.(2015)Wei, Dai, Qiao, Ho, Cui, Ganger, Gibbons, Gibson, and
  Xing]{Wei:2015:MCC:2806777.2806778}
Jinliang Wei, Wei Dai, Aurick Qiao, Qirong Ho, Henggang Cui, Gregory~R. Ganger,
  Phillip~B. Gibbons, Garth~A. Gibson, and Eric~P. Xing.
\newblock Managed communication and consistency for fast data-parallel
  iterative analytics.
\newblock In \emph{Proceedings of the Sixth ACM Symposium on Cloud Computing},
  SoCC '15, pages 381--394, New York, NY, USA, 2015. ACM.
\newblock ISBN 978-1-4503-3651-2.
\newblock \doi{10.1145/2806777.2806778}.

\bibitem[Weil et~al.(2006)Weil, Brandt, Miller, Long, and
  Maltzahn]{Weil:2006:CSH:1298455.1298485}
Sage~A. Weil, Scott~A. Brandt, Ethan~L. Miller, Darrell D.~E. Long, and Carlos
  Maltzahn.
\newblock Ceph: A scalable, high-performance distributed file system.
\newblock In \emph{Proceedings of the 7th Symposium on Operating Systems Design
  and Implementation}, OSDI '06, pages 307--320, Berkeley, CA, USA, 2006.
  USENIX Association.
\newblock ISBN 1-931971-47-1.

\bibitem[Xu and Yin(2017)]{xu2017globally}
Yangyang Xu and Wotao Yin.
\newblock A globally convergent algorithm for nonconvex optimization based on
  block coordinate update.
\newblock \emph{Journal of Scientific Computing}, 72\penalty0 (2):\penalty0
  700--734, 2017.

\bibitem[Zhang et~al.(2017{\natexlab{a}})Zhang, Li, Kara, Alistarh, Liu, and
  Zhang]{zipml}
Hantian Zhang, Jerry Li, Kaan Kara, Dan Alistarh, Ji~Liu, and Ce~Zhang.
\newblock {Z}ip{ML}: Training linear models with end-to-end low precision, and
  a little bit of deep learning.
\newblock In Doina Precup and Yee~Whye Teh, editors, \emph{Proceedings of the
  34th International Conference on Machine Learning}, volume~70 of
  \emph{Proceedings of Machine Learning Research}, pages 4035--4043,
  International Convention Centre, Sydney, Australia, 06--11 Aug
  2017{\natexlab{a}}. PMLR.

\bibitem[Zhang et~al.(2017{\natexlab{b}})Zhang, Zheng, Xu, Dai, Ho, Liang, Hu,
  Wei, Xie, and Xing]{poseidon}
Hao Zhang, Zeyu Zheng, Shizhen Xu, Wei Dai, Qirong Ho, Xiaodan Liang, Zhiting
  Hu, Jinliang Wei, Pengtao Xie, and Eric~P. Xing.
\newblock Poseidon: An efficient communication architecture for distributed
  deep learning on gpu clusters.
\newblock In \emph{Proceedings of the 2017 USENIX Conference on Usenix Annual
  Technical Conference}, USENIX ATC '17, pages 181--193, Berkeley, CA, USA,
  2017{\natexlab{b}}. USENIX Association.
\newblock ISBN 978-1-931971-38-6.

\end{thebibliography}

\appendix


\section{Proofs}

\subsection{Proof of Lemma~\ref{lem:gd:mainassumption}}
\label{appendix:lem:gd:mainassumption}

Lemma~\ref{lem:gd:mainassumption} follows from standard results on gradient descent, see e.g. the proof of Theorem~2.1.5 in \citet{nesterov2013book}.

\subsection{Proof of Theorem \ref{thm:main}}
\label{appendix:thm:main}

We start with the following useful lemma:
\begin{lemma}
\label{lem:periodic:basic}
Assuming \eqref{assm:fseq:stepbound}, we have for any $k$
\begin{align}
\label{eq:periodic:basic}
\E\norm{\fseq^{(k+1)} - \opt}
&\le c^{k+1}\Big[\norm{x^{(0)} - \opt} + \sum_{\ell=0}^{k}c^{-\ell}\E\norm{\delta_{\ell}}\Big].
\end{align}
\end{lemma}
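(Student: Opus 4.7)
The plan is to prove Lemma~\ref{lem:periodic:basic} by a straightforward induction on $k$, using assumption \eqref{assm:fseq:stepbound} together with the triangle inequality at each step. The only real bookkeeping concern is making sure the geometric weights $c^{-\ell}$ come out correctly after factoring, so the main ``work'' is just verifying the inductive step cleanly.

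For the base case $k=0$, I would start from $\fseq^{(1)} = f(\fseq^{(0)} + \delta_0) = f(\corrupt^{(0)})$, apply \eqref{assm:fseq:stepbound} to get $\E\norm{\fseq^{(1)} - \opt} \le c\,\E\norm{\corrupt^{(0)} - \opt}$, then use $\fseq^{(0)} = \seq^{(0)}$ and the triangle inequality to bound $\E\norm{\corrupt^{(0)} - \opt} \le \norm{\seq^{(0)} - \opt} + \E\norm{\delta_0}$. This matches the claimed bound with $k=0$ since $c^1[\norm{x^{(0)}-\opt} + c^0\,\E\norm{\delta_0}]$ is exactly what falls out.

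For the inductive step, assume the bound holds at index $k-1$. Then I would write
\begin{align*}
\E\norm{\fseq^{(k+1)} - \opt}
&= \E\norm{f(\fseq^{(k)} + \delta_k) - \opt}
\le c\,\E\norm{\fseq^{(k)} + \delta_k - \opt} \\
&\le c\,\E\norm{\fseq^{(k)} - \opt} + c\,\E\norm{\delta_k},
\end{align*}
where the first inequality uses \eqref{assm:fseq:stepbound} applied to $\corrupt^{(k)}$ and the second uses the triangle inequality and linearity of expectation. Plugging in the inductive hypothesis $\E\norm{\fseq^{(k)} - \opt} \le c^{k}\bigl[\norm{x^{(0)} - \opt} + \sum_{\ell=0}^{k-1}c^{-\ell}\E\norm{\delta_\ell}\bigr]$ and factoring $c^{k+1}$ out, the extra $c\,\E\norm{\delta_k}$ contributes exactly the $\ell = k$ term of the sum since $c\cdot\E\norm{\delta_k} = c^{k+1}\cdot c^{-k}\E\norm{\delta_k}$. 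This yields the desired bound.

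There is no substantive obstacle here; the lemma is essentially a restatement of a geometric-series unrolling of the one-step contraction \eqref{assm:fseq:stepbound} under additive perturbations. The only thing to be careful about is that \eqref{assm:fseq:stepbound} is stated in expectation over the randomness in $\corrupt^{(k)}$ (which already includes the randomness in $\fseq^{(k)}$ induced by $\delta_0,\dots,\delta_{k-1}$), so the induction goes through without needing to condition on prior perturbations separately.
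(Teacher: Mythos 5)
Your proof is correct and follows essentially the same route as the paper: apply \eqref{assm:fseq:stepbound} plus the triangle inequality to get the one-step bound $\E\norm{\fseq^{(k+1)}-\opt}\le c\big[\E\norm{\fseq^{(k)}-\opt}+\E\norm{\delta_k}\big]$, then unroll it (the paper iterates the inequality explicitly, you package the same unrolling as an induction, which is a purely cosmetic difference). Your closing remark about \eqref{assm:fseq:stepbound} being stated in expectation, so no separate conditioning is needed, is also consistent with how the paper uses the assumption.
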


\begin{proof}
For any $k>0$ we have
\begin{align}
\E\norm{\fseq^{(k+1)} - \opt}
\nonumber
&= \E\norm{f(\corrupt^{(k)}) - \opt} \\
\nonumber
&\le c\E\norm{\corrupt^{(k)} - \opt} \\
\nonumber
&= c\E\norm{\fseq^{(k)} + \delta_{k} - \opt} \\
\label{eq:lem:periodic:basic:1d}
&\le c\big[\E\norm{\fseq^{(k)} - \opt} + \E\norm{\delta_{k}}\big],
\end{align}

\noindent
where we have invoked \eqref{assm:fseq:stepbound}. Iterating this inequality, we obtain:
\begin{align}
c\big[\E\norm{\fseq^{(k)} &- \opt} + \E\norm{\delta_{k}}\big] \\
\nonumber
&\le c^{2}\E\norm{\fseq^{(k-1)} - \opt} + c^{2}\E\norm{\delta_{k-1}} + c\E\norm{\delta_{k}} \\
\nonumber
&\,\,\,\,\,\,\vdots\\
\nonumber
&\le c^{k+1}\E\norm{\fseq^{(0)} - \opt} + \sum_{i=0}^{k}c^{i+1}\E\norm{\delta_{k-i}} \\
\label{eq:lem:periodic:basic:2d}
&= c^{k+1}\norm{\seq^{(0)} - \opt} + \sum_{\ell=0}^{k}c^{k-\ell+1}\E\norm{\delta_{\ell}}.
\end{align}

\noindent
In the last step we simply re-indexed the summation and use $\fseq^{(0)}=\seq^{(0)}$. Combining \eqref{eq:lem:periodic:basic:1d} and \eqref{eq:lem:periodic:basic:2d} yields the desired bound.
\end{proof}

\begin{proof}[Proof of Theorem~\ref{thm:main}]
By Lemma~\ref{lem:periodic:basic}, we have for any $k>T$,
\begin{align}
\label{eq:single:deriv:1}
\E\norm{y^{(k)} - \opt}
\le c^{k}\Big[\norm{x^{(0)} - \opt} + \sum_{\ell=0}^{T}c^{-\ell}\E\norm{\delta_{\ell}}\Big]
&<\eps \\
\label{eq:single:deriv:2}
\iff
\frac1\eps\Big[\norm{x^{(0)} - \opt} + \Delta_{T}\Big]
&< c^{-k}
\end{align}

\noindent
Re-arranging, we deduce that $\E\norm{y^{(k)} - \opt}<\eps$ if 
\begin{align*}
k
> \frac{\log\Big(\frac1\eps\Big[\norm{x^{(0)} - \opt} + \Delta_{T}\Big]\Big)}{\log(1/c)}
\ge \iter(\fseq^{(k)},\;\eps).
\end{align*}

It is easy to check (e.g. take $\delta_{k}=0$ in the previous derivation) that 
$\iter(\seq^{(k)},\;\eps) = \log\big(\frac1\eps\norm{x^{(0)} - \opt}\big)/\log(1/c)$ is a bound on the number of iterations required for the unperturbed sequence $\seq^{(k)}$ to reach $\eps$-optimality. Thus, the iteration cost is given by
\begin{align*}
\cost(\delta_{k},\;\eps)
&= \iter(\fseq^{(k)},\;\eps) - \iter(\seq^{(k)},\;\eps) \\
&\le \frac{\log\Big(\frac1\eps\Big[\norm{x^{(0)} - \opt} + \Delta_{T}\Big]\Big) - \log\big(\frac1\eps\norm{x^{(0)} - \opt}\big)}{\log(1/c)} \\
&= \frac{\log\Big(1 + \frac{\Delta_{T}}{\norm{x^{(0)} - \opt}}\Big)}{\log(1/c)},
\end{align*}

\noindent
as claimed.
\end{proof}

\subsection{Proof of Theorem \ref{thm:partial-recovery}}
\label{appendix:thm:partial-recovery}

Let \( z = x^{(C)} \) be the checkpoint of the model parameters saved at iteration \( C \), and let \( S \) be the subset of model parameters lost during a failure at iteration \( T \). Then
\[ ||\delta|| = ||z - x^{(T)}|| \]
is the perturbation due to full recovery, and
\[ ||\delta'|| = ||z_S - x^{(T)}_S|| \]
is the perturbation due to partial recovery, since \( x^{(T)}_{S^c} \) does not change due to failure, where \( S^c \) is the complement set of \( S \). Then we have
\begin{align*}
||\delta'||^2 &= ||z_S - x^{(T)}_S||^2 \\
&\le ||z_S - x^{(T)}_S||^2 + ||z_{S^c} - x^{(T)}_{S^c}||^2 \\
&\quad + (z_S - x^{(T)}_S) \cdot (z_{S^c} - x^{(T)}_{S^c}) \\
&\le ||(z_S - x^{(T)}_S) + (z_{S^c} - x^{(T)}_{S^c})||^2 \\
&= ||z - x^{(T)}||^2 = ||\delta||^2
\end{align*}
Thus \( \norm{\delta'} \le \norm{\delta} \), as claimed.

\subsection{Proof of Theorem \ref{thm:partial-recovery-exp}}
\label{appendix:thm:partial-recovery-exp}

Let \( z = x^{(C)} \) be the checkpoint of the model parameters saved at iteration \( C \), and let \( S \) be the subset (chosen uniformly at random) of model parameters lost during a failure at iteration \( T \). Then
\begin{align*}
\mathbb{E}||\delta'||^2 &= \mathbb{E}||z_S - x^{(T)}_S||^2 \\
&= \mathbb{E}\left[(z_S - x^{(T)}_S) \cdot (z_S - x^{(T)}_S)\right] \\
&= \sum_i \mathbb{E}\left[(z_S - x^{(T)}_S)_i^2\right] \\
&= \sum_i \mathbb{E}\left[[i \in S](z_i - x^{(T)}_i)^2\right] \\
&= \sum_i P(i \in S)(z_i - x^{(T)}_i)^2 \\
&= \sum_i p(z_i - x^{(T)}_i)^2 \\
&= p||z - x^{(T)}||^2 = p||\delta||^2
\end{align*}
Thus \( \mathbb{E}||\delta'||^2 = p||\delta||^2 \), as claimed.

\section{Extensions}

This appendix discusses extensions of our framework to (a) Infinite perturbations (Example~\ref{ex:analysis:infinite}) and (b) SGD (Example~\ref{ex:analysis:sgd}).

\subsection{Analysis for $T=\infty$}
\label{appendix:infinite}

Suppose $\E\norm{\delta_{k}}\le\Delta$ for each $k$. For intuition, note that Lemma~\ref{lem:periodic:basic} implies that for any $k$,
\begin{align}
\E\norm{\fseq^{(k+1)} - \opt}
\nonumber
&\le c^{k+1}\Big[\norm{x^{(0)} - \opt} + \sum_{\ell=0}^{k}c^{-\ell}\Delta\Big] \\
\label{eq:infinite:basic:1b}
&= c^{k+1}\Big[\norm{x^{(0)} - \opt} + \Delta\frac{1-c^{-(k+1)}}{1-c^{-1}}\Big] \\
\nonumber
&= c^{k+1}\norm{x^{(0)} - \opt} + \Delta\frac{c-c^{k+2}}{1-c} \\
\nonumber
&\overset{k\to\infty}{\longrightarrow} \frac{c}{1-c}\Delta.
\end{align}
\noindent
Evidently, there is an irreducible, positive error if we are subject to fault in every single iteration.

Thus, the best we can hope for is convergence to within some tolerance $\eps>(c/(1-c))\Delta$.
Re-arranging and solving for $k$ in \eqref{eq:infinite:basic:1b} as in the proof of Theorem~\ref{thm:main}, we deduce that $\E\norm{\fseq^{(k+1)} - \opt}<\eps$ as long as
\begin{align*}
k 
> \frac{\log\Big(\frac{\norm{x^{(0)} - \opt} - \frac{c}{1-c}\Delta}{\eps - \frac{c}{1-c}\Delta}\Big)}{\log(1/c)}.
\end{align*}

\noindent
The resulting iteration cost bound is (cf. \eqref{eq:bound:itercost}):
\begin{align}
\cost(\delta_{k},\;\eps)
&\le \frac{\log\Bigg(\frac{1 - \frac{\frac{c}{1-c}\Delta}{\norm{x^{(0)} - \opt}}}{1 - \frac{\frac{c}{1-c}\Delta}{\eps}}\Bigg)}{\log(1/c)}.
\end{align}

\noindent
This bound is only informative if $\norm{x^{(0)} - \opt}>(c/(1-c))\Delta$ and $\eps>(c/(1-c))\Delta$.

\subsection{Stochastic gradient descent}
\label{appendix:sgd}

Assume the objective function $\ell$ is strongly convex, as in Lemma~\ref{lem:gd:mainassumption}. In order to derive upper bounds on the iteration cost for SGD, we start from following general recursion, which is standard from the literature \citep{nemirovski2009robust,rakhlin2012making}:
\begin{align}
\label{eq:sgd:recursion}
\E\norm{\seq^{(k+1)} - \opt}^{2}
\le (1-\alpha_{k})\E\norm{\seq^{(k)} - \opt}^{2} + \alpha_{k}^{2}G^{2},
\end{align}

\noindent
where $\alpha_{k}\to0$ is a sequence that depends on $\ell$ and the step size, and $G$ is an upper bound on the expected norm of the stochastic gradients. Comparing \eqref{eq:sgd:recursion} to \eqref{eq:lem:periodic:basic:1d}, the only difference is that instead of a constant $c<1$, we have a sequence $1-\alpha_{k}\to1$. Thus, instead of decaying at the geometric rate $c^{k}$, the iterates of SGD converge at a slower rate $(1-\alpha_{1})\cdots(1-\alpha_{k})$.

Define $a_{k}:=(1-\alpha_{1})\cdots(1-\alpha_{k})$. Under the assumptions of Theorem~\ref{thm:main}, we have the following analogue of \eqref{eq:single:deriv:1}:
\begin{align*}
\E\norm{y^{(k)} - \opt}
\le a_{k}\Big[\norm{x^{(0)} - \opt} + \sum_{\ell=0}^{T}a_{\ell}^{-1}\big(\E\norm{\delta_{\ell}} + \alpha_{\ell}^{2}G^{2}\big)\Big]
&<\eps. 
\end{align*}
\noindent
This yields an implicit formula for $k$, which can be used to upper bound the iteration cost for SGD. For example, a popular choice of $\alpha_{k}$ is $\alpha_{k}\propto1/k$, in which case $a_{k}\propto 1/k$ (this follows from an induction argument), and solving for $k$ yields the desired upper bound.

\section{Details of Models and Datasets}
\label{appendix:experiments}

In this brief appendix, we collect some details of the different models and datasets used in the experiments in Section \ref{sec:models}.

\paragraph{Multinomial Logistic Regression (MLR).} For MNIST, we use a batch size of 10,000, a learning rate of \( 1 \times 10^{-5} \), and a convergence criteria of \( 2.5 \times 10^4 \) in cross-entropy loss. For CoverType, we use a batch size of 1,000, a learning rate of \( 1 \times 10^{-7} \), and a convergence criteria of \( 6.7 \times 10^5 \) in cross-entropy loss. For both datasets, the convergence criteria is reached in roughly 60 iterations.

\paragraph{Matrix Factorization (MF).} For MovieLens, we use 20 factors and a convergence criteria of \( 9.2 \times 10^2 \) in mean squared error loss. For Jester, we use 5 factors and a convergence criteria of \( 5.57 \times 10^3 \) in mean squared error loss. The MovieLens dataset is the \texttt{movielens-small} version consisting of 671 users and 9,125 items. The Jester dataset is the \texttt{Jester 2+} version, We further remove users with no ratings, and re-scale ratings from \( [-10, 10] \) to \( [0, 10] \). For both datasets, the factor matrices \( L \) and \( R \) are randomly initialized with each entry sampled uniformly at random from \( [0, 1) \), and the convergence criteria is reached in roughly 60 iterations.

\paragraph{Latent Dirichlet Allocation (LDA).} In LDA, each document in the input data consists of a series of tokens, where each token is assigned a categorical topic. Topic assignments are repeatedly randomly sampled during the lifetime of a job. From these token-topic assignments, a document-topic distribution is constructed for each document, and a word-topic distribution is constructed for each unique word. Both the document-topic and word-topic distributions can be re-generated given the token-topic assignments, so losing the distributions themselves is not a problem. However, when distributed, each document-topic distribution is typically co-located with the document it corresponds to. Thus, losing a document-topic distribution is typically associated with also losing the token-topic assignments of that document, which do require recovery from a saved checkpoint. Therefore, we only consider the loss of document-topic distributions, and assume that the word-topic distributions can be reconstructed at any time.

Since the parameters of LDA are distributions, a natural norm to use is the total variation norm. However, the total variation norm when applied to LDA puts the same weight onto every document-topic distribution. This means that re-sampling a token-topic assignment in a shorter document has a greater impact to the overall norm than re-sampling a token-topic assignment in a longer document, which biases checkpoint prioritization towards shorter documents. To address this, we scale the total variation norm of each document-topic distribution by the length of the document it corresponds to. The result is still a valid norm, since it is a positive linear combination (which is constant with respect to the input data) of total variation norms.

For 20 Newsgroups, we use a convergence criteria of \( 9.5 \times 10^6 \) in negative log-likelihood. For Reuters, we use a convergence criteria of \( 8.5 \times 10^5 \) in negative log-likelihood. For both datasets we train using \( 20 \) topics and hyperparameters \( \alpha = \beta = 1 \). The convergence criteria is reached in roughly \( 60 \) iterations.

\paragraph{Convolutional Neural Network (CNN).} We use a batch size of \( 64 \), the recommended Adam settings of \( \alpha = 0.001 \), \( \beta_1 = 0.9 \), \( \beta_2 = 0.999 \), and \( \epsilon = 10^{-8} \), and a convergence criteria of \( 0.08 \) in cross-entropy loss. In by-layer partitioning, the weight and bias parameters are independent and partitioned separately (so they can either be lost together, or not). In by-shard partitioning, each parameter tensor is evenly partitioning according to its first dimension.

\end{document}